\newcommand{\algopt}{\textsc{CapaBoost}\xspace}
\newcommand{\row}{\textrm{row}}
\newcommand{\col}{\textrm{col}}
\newcommand{\ColumnSpace}{\textrm{ColumnSpace}}
\newcommand{\RowSpace}{\textrm{RowSpace}}
\newcommand{\rank}{\textrm{rank}}
\newcommand{\nonlinear}{\textrm{non-linear}}
\def\eqref#1{equation~\ref{#1}}
\def\Eqref#1{Equation~\ref{#1}}
\def\1{\bm{1}}
\def\mA{{\bm{A}}}
\def\mB{{\bm{B}}}
\def\mC{{\bm{C}}}
\def\mI{{\bm{I}}}
\def\mR{{\bm{R}}}
\def\mW{{\bm{W}}}
\def\mX{{\bm{X}}}
\def\mY{{\bm{Y}}}
\DeclareMathAlphabet{\mathsfit}{\encodingdefault}{\sfdefault}{m}{sl}
\SetMathAlphabet{\mathsfit}{bold}{\encodingdefault}{\sfdefault}{bx}{n}
\newcommand{\R}{\mathbb{R}}
\newtheorem{theorem}{Theorem}[section]
\newtheorem{lemma}[theorem]{Lemma}
\newtheorem{remark}[theorem]{Remark}
\providecommand{\R}{\mathbb{R}} % Reals
\providecommand{\1}{\mathbf{1}}
\providecommand{\bb}{\mathbf{b}}
\let\lll\ll
\renewcommand{\ll}{\mathbf{l}}
\providecommand{\mm}{\mathbf{m}}
\providecommand{\ww}{\mathbf{w}}
\providecommand{\xx}{\mathbf{x}}
\providecommand{\yy}{\mathbf{y}}
\providecommand{\zz}{\mathbf{z}}
\providecommand{\mA}{\mathbf{A}}
\providecommand{\mB}{\mathbf{B}}
\providecommand{\mC}{\mathbf{C}}
\providecommand{\mI}{\mathbf{I}}
\providecommand{\mR}{\mathbf{R}}
\providecommand{\mW}{\mathbf{W}}
\providecommand{\mX}{\mathbf{X}}
\providecommand{\mY}{\mathbf{Y}}
\providecommand{\cN}{\mathcal{N}}
\newenvironment{talign*}
{\csname align*\endcsname}
{\endalign}
\newcommand*{\algrule}[1][\algorithmicindent]{\makebox[#1][l]{\hspace*{.5em}\thealgruleextra\vrule height \thealgruleheight depth \thealgruledepth}}%
\newcommand*{\thealgruleextra}{}
\newcommand*{\thealgruleheight}{.75\baselineskip}
\newcommand*{\thealgruledepth}{.25\baselineskip}
\def\ALG@printindent{%
	\ifnum \theALG@nested>0% is there anything to print
	\ifx\ALG@text\ALG@x@notext% is this an end group without any text?
		% do nothing
	\else
		\unskip
		\addvspace{-1pt}% FUDGE to make the rules line up
		% draw a rule for each indent level
		\ALG@printindent@tempcnta=1
		\loop
		\algrule[\csname ALG@ind@\the\ALG@printindent@tempcnta\endcsname]%
		\advance \ALG@printindent@tempcnta 1
		\ifnum \ALG@printindent@tempcnta<\numexpr\theALG@nested+1\relax% can't do <=, so add one to RHS and use < instead
			\repeat
		\fi
	\fi
}%
\patchcmd{\ALG@doentity}{\noindent\hskip\ALG@tlm}{\ALG@printindent}{}{\errmessage{failed to patch}}
\newbox\statebox
\newcommand{\myState}[1]{%
	\setbox\statebox=\vbox{#1}%
	\edef\thealgruleheight{\dimexpr \the\ht\statebox+1pt\relax}%
	\edef\thealgruledepth{\dimexpr \the\dp\statebox+1pt\relax}%
	\ifdim\thealgruleheight<.75\baselineskip
		\def\thealgruleheight{\dimexpr .75\baselineskip+1pt\relax}%
	\fi
	\ifdim\thealgruledepth<.25\baselineskip
		\def\thealgruledepth{\dimexpr .25\baselineskip+1pt\relax}%
	\fi
	%\showboxdepth=100
	%\showboxbreadth=100
	%\showbox\statebox
	\State #1%
	%\State \usebox\statebox
	%\State \unvbox\statebox
	%reset in case the next command is not wrapped in \myState
	\def\thealgruleheight{\dimexpr .75\baselineskip+1pt\relax}%
	\def\thealgruledepth{\dimexpr .25\baselineskip+1pt\relax}%
}
\definecolor{darkblue}{rgb}{0.0, 0.0, 0.55}
\definecolor{myblue}{rgb}{0,0.45,0.74}
\definecolor{myred}{rgb}{0.85,0.33,0.1}
\title{Increasing Model Capacity for Free: A Simple Strategy for Parameter Efficient Fine-tuning}
\author{Haobo Song$^{2,}$\thanks{Equal contribution.} \quad Hao Zhao$^{2,*}$ \quad Soumajit Majumder$^{3}$ \quad Tao Lin$^{1,}$\thanks{Corresponding author. } \\
\texttt{haobo.song@epfl.ch}; \quad \texttt{hao.zhao@epfl.ch}; \\
\texttt{soumajit.majumder@huawei.com}; \quad \texttt{lintao@westlake.edu.cn} \\
    $^1$Westlake University \quad
    $^2$EPFL \quad
    $^3$Huawei
}
\begin{document}

\maketitle

% !TeX root = iclr2024_capaboost.tex

\begin{abstract}
	Fine-tuning large pre-trained foundation models, such as the 175B GPT-3, have attracted more attention for downstream tasks recently.
	While parameter-efficient fine-tuning methods have been proposed and proven effective without retraining all model parameters, their performance is limited by the capacity of incremental modules, especially under constrained parameter budgets. \\
	To overcome this challenge, we propose \algopt, a simple yet effective strategy that enhances model capacity by leveraging low-rank updates through parallel weight modules in target layers.
	By applying static random masks to the shared weight matrix, \algopt constructs a diverse set of weight matrices, effectively increasing the rank of incremental weights without adding parameters.
	Notably, our approach can be seamlessly integrated into various existing parameter-efficient fine-tuning methods.
	We extensively validate the efficacy of \algopt through experiments on diverse downstream tasks, including natural language understanding, question answering, and image classification.
	Our results demonstrate significant improvements over baselines, without incurring additional computation or storage costs.
	%We will make our code and benchmark publicly available.
	Our code is available at \url{https://github.com/LINs-lab/CapaBoost}.
	\looseness=-1
\end{abstract}

\setlength{\parskip}{1.25pt plus1.25pt minus0pt}

\section{Introduction}
In recent years, the prevailing training paradigm has revolved around pre-training models on large-scale datasets and subsequently fine-tuning them for diverse downstream tasks, yielding remarkable achievements.
However, the increasing size of popular pre-trained models, such as LLaMA2~\citep{touvron2023llama} and GPT3 with a size of 175B~\citep{floridi2020gpt}, poses significant challenges for full-sized model fine-tuning.
Memory and storage limitations restrict its practicality and applicability.

Parameter Efficient Fine-Tuning (PEFT) emerges as a compelling solution to address these challenges head-on.
Unlike the resource-intensive nature of full-size fine-tuning, PEFT adopts a judicious approach by either fine-tuning a small subset of the original model's parameters or introducing a limited number of additional parameters during the fine-tuning process.
This strategy effectively alleviates the memory and computation burdens, providing a cost-effective alternative that can match or surpass the performance of full-size fine-tuning.
Prominent PEFT techniques employed today include Prompt Learning~\citep{sun2020conditioned}, Prefix-Tuning~\citep{li2021prefix}, Adapters~\citep{houlsby2019parameter,pfeiffer2021adapterfusion}, and LoRA~\citep{hu2022lora}.

The core concept of these PEFT techniques lies in approximating a single layer's heavy-weight matrix $\ww \in \R^{d_1\times d_2}$
by two consecutive layers possessing much smaller inner dimension $r$ such as $\mB \in \R^{d_1\times r}, \mA \in \R^{r\times d_2}$.
This approach results in a significant reduction in parameter count (evident in Adapters and LoRA), achieving performance comparable to full-size fine-tuning while retaining only $1\%$ of trainable parameters.
However, the inner dimension cannot be arbitrarily small, as its impact on performance is substantial.
For instance, in LoRA, the trainable parameter $\mB\mA$ is constrained by $\rank(\mB \mA) \leq r$~\citep{hu2022lora}, setting an upper bound on the model's capacity.
These capacity constraints often lead to suboptimal performance when $r$ is small~\citep{he2021towards}.

The pursuit of an innovative method capable of reducing parameters while preserving a high model rank becomes paramount.
To tackle this challenge, we present \algopt, a versatile framework aimed at \emph{increasing model capacity for free} in various PEFT approaches, e.g., Prefix-Tuning, Adapters, LoRA, and their variants.
Through the integration of model pruning and weight sharing, \algopt significantly amplifies model capacity while simultaneously reducing parameter count, all without inflating FLOP numbers.
Leveraging the elevated rank and expanded model capacity, \algopt PEFT methods demonstrate substantial performance improvements over their original counterparts, as demonstrated in~\autoref{fig:intro}.
Our contributions are summarized as follows:
\begin{itemize}[nosep, leftmargin=12pt]
	\item We introduce \algopt, a plugin framework that seamlessly integrates with various PEFT methods, such as Adapters and LoRA.
	      Through model pruning and weight-sharing, \algopt enhances model rank without incurring additional costs. \looseness=-1
	\item Extensive results unequivocally demonstrate that \algopt outperforms all state-of-the-art baselines while significantly reducing parameter count and maintaining the same or fewer FLOPs. \looseness=-1
\end{itemize}

\begin{figure}[!t]
	\centering
	\includegraphics[width=0.8\textwidth]{./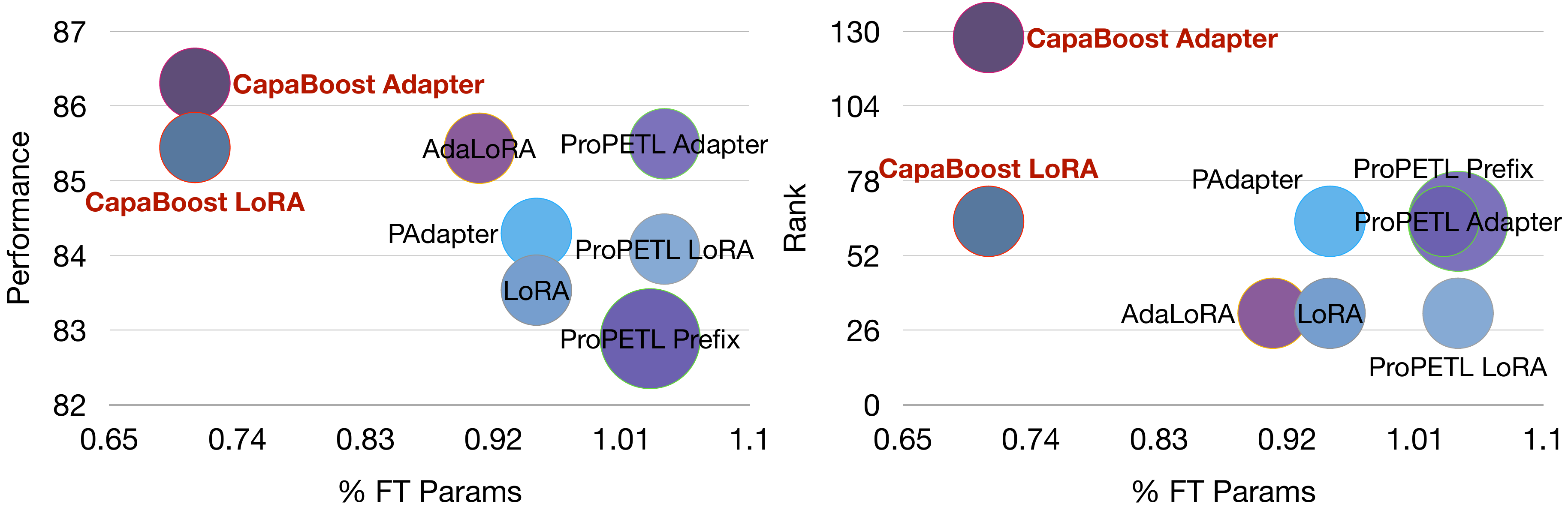}
	\vspace{-1.1em}
	\caption{\small
		\textbf{Rank and Performance comparison among several PEFT methods in GLUE test with RoBERTa base model.}
		The left figure shows the performance and parameter numbers (shown as percentage to fully fine-tuning) of different methods and indicates \algopt is the best.
		The right figure shows the rank and parameter numbers of several methods and \algopt has the highest rank among similar PEFT methods.
		\looseness=-1
	}
	\vspace{-1.8em}
	\label{fig:intro}
\end{figure}

% \textcolor{blue}{As the size of pre-trained models has grown exponentially in recent years, the common practice that fine-tunes all model parameters becomes prohibitively expensive.
% 	To address this issue, many PEFT methods have been proposed to optimize the fine-tuning process via dramatically reducing trainable parameters while maintaining performance not inferior to fully fine-tuning.
% }

\section{Related Work}
We provide a compact summary here due to space issues. A complete discussion is in~\autoref{appendix:complete_related_work}.

% \subsection{Parameter-Efficient Fine-tuning (PEFT)}
\paragraph{Parameter-Efficient Fine-tuning (PEFT).}
In the realm of PEFT, the initial approach involves selecting a subset of the original model parameters for updating, such as top layers~\citep{donahue2014decaf}, specific model layers~\citep{gheini2021cross}, or internal modules~\citep{zaken2022bitfit}.
While these brute-force selection methods effectively reduce the number of trainable parameters, they often result in sub-optimal performance.
Consequently, various scoring functions have been employed to assess the importance of each parameter~\citep{sung2021training, ansell2022composable, guo2021parameter}.
However, these scoring functions are typically task-specific and involve additional computational overhead.

Another line of work proposes sharing the pre-trained network and incorporating task-specific trainable modules, significantly reducing storage costs~\citep{zhao2020masking}.
Notably, HAdapter~\citep{houlsby2019parameter} introduces adapter modules after each feed-forward and attention layer, while PAdapter~\citep{pfeiffer2021adapterfusion} further suggests using adapters after FFN and LayerNorm modules~\citep{ba2016layer} for improved efficiency.
Drawing inspiration from textual prompting methods~\citep{sun2020conditioned,liu2019roberta,jiang2020can,shin2020autoprompt}, Prefix-Tuning~\citep{li2021prefix} employs additional prefix vectors that are prepended to the keys and values in the attention layer.
Building upon this, \citet{he2021towards} conducts a systematic exploration of existing advancements and proposes a diverse range of Adapter variants by incorporating design elements from Prefix-Tuning.

To alleviate the inference latency caused by incremental parameters, \citet{hu2022lora} employ a bottleneck structure with a low-rank constraint, enabling the merging of learned weights into the pre-trained network.
However, these methods are limited by the prescribed rank values, which impose constraints on the maximum rank of trainable parameter matrices, thus limiting the model's capacity.
In contrast, our work aims to overcome such capacity limitations and enhance parameter efficiency for fine-tuning.
\looseness=-1

% \subsection{Low-rank Properties in Deep Neural Networks}
\paragraph{Low-rank Properties in Deep Neural Networks.}
In the over-parameterized regime, it has been widely observed in various deep learning tasks that neural networks exhibit low-rank properties following training~\citep{oymak2019generalization}.
Building upon this insight, several works~\citep{jaderberg2014speeding, sainath2013low, khodak2020initialization} propose explicitly incorporating low-rank constraints during neural network training, leading to notable successes in CNNs.
Following a similar approach, LoRA~\citep{hu2022lora} and subsequent studies~\citep{dettmers2023qlora, zhang2022adaptive, chavan2023one, longlora} adopt low-rank updates applied to a frozen pre-trained network for fine-tuning in downstream tasks.

While modern neural architectures like transformers have demonstrated~\citep{aghajanyan2021intrinsic, wang2020linformer} low-rank characteristics in their dimensions and representations, \citet{bhojanapalli2020low} highlight that the low-rank property of key and query projections in the multi-head attention modules becomes a bottleneck for transformer performance.
Furthermore, experiments in~\citet{lialin2023stack} reveal that transformers with low-rank updates may under-perform that of full-rank baselines in some cases, especially when the fine-tuning weight rank is small. \looseness=-1

\subsection{Parameter Pruning for PEFT Methods}
Weight-tied models, also referred to as weight-sharing or weight-tying models, represent a parameter-efficient neural network architecture where the same set of weights is shared across different layers or segments of the input~\citep{dehghani2018universal, dabre2019recurrent, xia2019tied, lan2020ALBERT, li2021training, takase2021lessons}.
Serving as the foundation for most implicit models, this architecture has garnered significant attention in recent years across a wide range of tasks~\citep{wang2019weight, liu2020comprehensive, yang2018unsupervised, lan2020ALBERT, takase2021lessons, zhang2020deeper, bender2020can, xie2021weight, li2021training}.
Pruning, an orthogonal yet extensively employed strategy, aims to enhance neural network efficiency by identifying and eliminating redundant parameters~\citep{wang2022rethinking, frankle2018the, lin2020dynamic, su2020sanity, frankle2021pruning, wang2022recent, he2022sparseadapter}.
To the best of our knowledge, our idea of introducing various deterministic random masks to a parallel weight-tied model is novel.

The most relevant work to our method might be~\citet{bai2022parameter}, which extends the codebook idea and learns masks on top of a fixed random weight vector to represent diverse dense layers.
Specifically, masks are utilized to select values from a random vector (i.e., codebook), resulting in layers with distinct structures.
Simultaneously, \citet{zeng2023onenetwork} introduce ProPETL, a method that also incorporates shared weights but employs different masks to encode layer-specific and task-specific knowledge.
Our work, in contrast to these approaches, learns shared weights with deterministic random binary masks, leading to stronger model capability.

AdaLoRA~\citep{zhang2022adaptive} offers an alternative approach to achieving parameter efficiency by dynamically allocating the parameter budget across different layers.
This is accomplished through iterative pruning of singular values of incremental parameters based on an importance metric.
However, it is worth noting that AdaLoRA entails additional computational overhead and necessitates a higher initial budget of trainable parameters, making it less suitable for low-resource scenarios.

\section{\algopt learning framework}
\subsection{Motivation: Rank Matters in PEFT Methods}
\label{sec:motivation}
LoRA \citep{hu2022lora}, Adapter~\citep{li2021prefix}, and Prefix-Tuning~\citep{houlsby2019parameter} are the representative PEFT methods in the community.
These PEFT methods share similar insights and can be unified~\citep{he2021towards}.
More specifically,
\begin{itemize}[nosep, leftmargin=12pt]
	\item The core idea of LoRA is proposing to model the incremental update of the pre-trained weights $\mW_{\text{pre-trained}} \in \R^{d_1 \times d_2}$ through low-rank approximations $\zz = \xx \left( \mW_{\text{pre-trained}} + \mB \mA \right)$, where $\mB \mA$ is trainable and in low rank, $\xx \in \R^{d_1}$, $\mB \in \R^{d_1 \times r}$, $\mA \in \R^{r \times d_2}$, and $r \lll \{ d_1, d_2 \}$.
	\item In Adapter or Prefix-Tuning, a module is inserted into a pre-trained model with structure $\zz = \xx \mW_{\text{pre-trained}} + f(\xx \mB)\mA$, where $\mB \in \R^{d_1 \times r}$ and $\mA \in \R^{r \times d_2}$ are trainable low-rank parameters.
	      $\xx \in \R^{d_1}$ is the input data, $f_{\nonlinear}( \cdot )$ is the non-linear function, and $r \lll \{ d_1, d_2 \}$.
\end{itemize}

While the compact dimensions of $\mB$ and $\mA$ offer notable parameter efficiency benefits, they also impose significant constraints on model capacity due to the inherent low-rank nature of their product, particularly when inner dimension $r$ is low.
This restriction inevitably leads to suboptimal model performance, as evidenced by the findings presented in Figure 4 of \citet{he2021towards}, Figure 2 of \citet{zhang2022adaptive}, and our Figure~\autoref{fig:ablation-rank}.

To unlock superior performance, a crucial imperative emerges: elevating the rank of $\mB \mA$.

\subsection{Increasing Model Capacity by Increasing the Rank}
Hereby, we propose the following theorem which indicates an efficient potential to increase rank.
\begin{theorem} \label{thm:increase_rank}
	Assume two matrices $\mX$ and $\mY$ are randomly generated by $\mX = \mX^{\col} \mX^{\row}$ and $\mY = \mY^{\col} \mY^{\row}$ respectively.
	$\mX^{\col} := [\xx^{\col}_1, \xx^{\col}_2, \ldots, \xx^{\col}_r] \in \R^{d\times r}$, where column vector basis $\{\xx^{\col}_1, \xx^{\col}_2, \ldots, \xx^{\col}_r \}$ are sampled from $\cN(0, \mI_d)$.
	Similarly, $\mX^{\row}=[\xx^{row}_1, \xx^{\row}_2, \ldots, \xx^{\row}_r]^\top \in \R^{r\times d}$ by sampling row vector basis $\{\xx^{\row}_1, \xx^{\row}_2, \ldots, \xx^{\row}_r\}$ from $\cN(0, \mI_d)$.
	$\mI_d \in \R^{d \times d}$ denotes an identity matrix.
	For matrices $\mX \in \R^{d \times  d}$ and $\mY \in \R^{d \times d}$,  we have
	% \footnote{Event establishes with probability 1 means it almost surely happens, but doesn't mean it is bound to happen.}
	\begin{align}
		\rank(\mX \!+\! \mY) = \rank(\mX) \!+\! \rank(\mY) \, \text{ with probability equal to 1 almost surely when } 2r < d \,.
	\end{align}
\end{theorem}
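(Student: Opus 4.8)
The plan is to collapse $\mX + \mY$ into a single matrix product whose inner dimension is only $2r$, and then show both outer factors are simultaneously of full rank almost surely. Stacking the generating factors, I would write
\begin{align}
	\mX + \mY = \mX^{\col}\mX^{\row} + \mY^{\col}\mY^{\row} = \underbrace{\bigl[\,\mX^{\col} \;\; \mY^{\col}\,\bigr]}_{=:\,\mP} \underbrace{\begin{bmatrix} \mX^{\row} \\ \mY^{\row} \end{bmatrix}}_{=:\,\mQ} \,,
\end{align}
where $\mP \in \R^{d\times 2r}$ and $\mQ \in \R^{2r\times d}$, so that $\rank(\mX+\mY) = \rank(\mP\mQ)$. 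The upper bound $\rank(\mP\mQ) \le \min\{\rank(\mP),\rank(\mQ)\} \le 2r$ is immediate. Moreover, almost surely $\rank(\mX) = \rank(\mY) = r$, since each is the product of a full-column-rank Gaussian factor and a full-row-rank Gaussian factor; hence the claimed identity is exactly $\rank(\mP\mQ) = 2r$, and everything reduces to a matching lower bound.

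The key step is to establish that $\mP$ has full column rank $2r$ and $\mQ$ has full row rank $2r$, each with probability one. The columns of $\mP$ are precisely the $2r$ vectors $\xx^{\col}_1,\ldots,\xx^{\col}_r,\yy^{\col}_1,\ldots,\yy^{\col}_r$, which are i.i.d. draws from $\cN(0,\mI_d)$; since $2r < d$, I would invoke the standard fact that at most $d$ absolutely continuous i.i.d. vectors in $\R^d$ are linearly independent almost surely. The clean justification is that linear dependence forces every $2r\times 2r$ minor of $\mP$ to vanish, each such minor is a polynomial in the Gaussian entries that is \emph{not} identically zero, and the zero set of a nonzero polynomial has Lebesgue measure zero and hence probability zero under the absolutely continuous Gaussian law. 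The identical argument applied to the $2r$ i.i.d. rows of $\mQ$ gives $\rank(\mQ) = 2r$ almost surely, and intersecting these two probability-one events (together with $\rank(\mX)=\rank(\mY)=r$) preserves probability one.

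Finally I would close the gap with Sylvester's rank inequality, $\rank(\mP\mQ) \ge \rank(\mP) + \rank(\mQ) - 2r$, whose inner dimension is $2r$; on the probability-one event this reads $\rank(\mP\mQ) \ge 2r + 2r - 2r = 2r$, which together with the upper bound yields $\rank(\mX+\mY) = 2r = \rank(\mX)+\rank(\mY)$. I expect the main obstacle to be the almost-sure linear-independence claim rather than the algebra: one must argue carefully that the relevant minor determinant is genuinely a nonzero polynomial (so its zero set is measure-zero) and that the mutual independence of all $2r$ Gaussian columns, and separately all $2r$ Gaussian rows, is what makes the generic full-rank configuration occur with probability one. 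Once this is in hand, the block factorization plus Sylvester's inequality make the conclusion routine.
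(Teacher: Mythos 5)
Your proposal is correct, but it takes a genuinely different route from the paper. You collapse the sum into a single product $\mX+\mY=\mP\mQ$ with inner dimension $2r$, prove that the Gaussian factors $\mP\in\R^{d\times 2r}$ and $\mQ\in\R^{2r\times d}$ have full rank $2r$ almost surely via the standard nonzero-minor/measure-zero argument, and then sandwich $\rank(\mP\mQ)$ between the trivial upper bound $2r$ and the Sylvester lower bound $\rank(\mP)+\rank(\mQ)-2r=2r$. The paper instead invokes a rank-of-sum lower bound of Marsaglia, $\rank(\mX+\mY)\ge\rank(\mX)+\rank(\mY)-\dim(\mR_1\cap\mR_2)-\dim(\mC_1\cap\mC_2)$, and shows the column spaces and row spaces of $\mX$ and $\mY$ intersect trivially almost surely by a sequential argument: each new Gaussian column $\yy^{\col}_j$ avoids the span of the at most $r+j-1<d$ previously fixed vectors because a proper subspace has Lebesgue measure zero. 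Both arguments are sound and both hinge on the same underlying fact (absolutely continuous vectors avoid lower-dimensional algebraic sets), but yours is more self-contained --- it needs only Sylvester's inequality and the generic full rank of Gaussian matrices, rather than an external rank-intersection lemma --- and it makes the role of the hypothesis $2r<d$ completely transparent, as this is exactly what allows $\mP$ and $\mQ$ to attain rank $2r$. The paper's subspace-intersection viewpoint is more geometric and localizes the failure mode (a nontrivial overlap of row or column spaces) more explicitly, which is perhaps closer to the intuition the authors want to convey about why the two masked summands contribute independent directions. One small presentational point: for the full-rank step it suffices to exhibit a single $2r\times 2r$ minor that is not the zero polynomial (e.g.\ it evaluates to $1$ when the corresponding block is the identity), rather than reasoning about every minor; your wording gestures at this but could state it more crisply.
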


Proof of this theorem is provided in~\autoref{appendix: detailed-proof}.
\begin{remark}
	The matrix $\mX$ in Theorem~\ref{thm:increase_rank} is generated by the multiplication of two low-rank matrices, which share an identical form with trainable weight $\ww = \mB \mA$ in the PEFT methods.
	The vectors in $\mX^{\col}$ and $\mX^{\row}$ are sampled from Gaussian distribution, while $\mA$ is also initialized in Gaussian.
	Hence, the high similarity between $\ww$ and $\mX$ suggests that this theorem might be applicable and effective in PEFT methods as well.
\end{remark}

\begin{figure}[!t]
	\centering
	\centering
	\subfigure[]{
		\includegraphics[width=0.55\textwidth]{./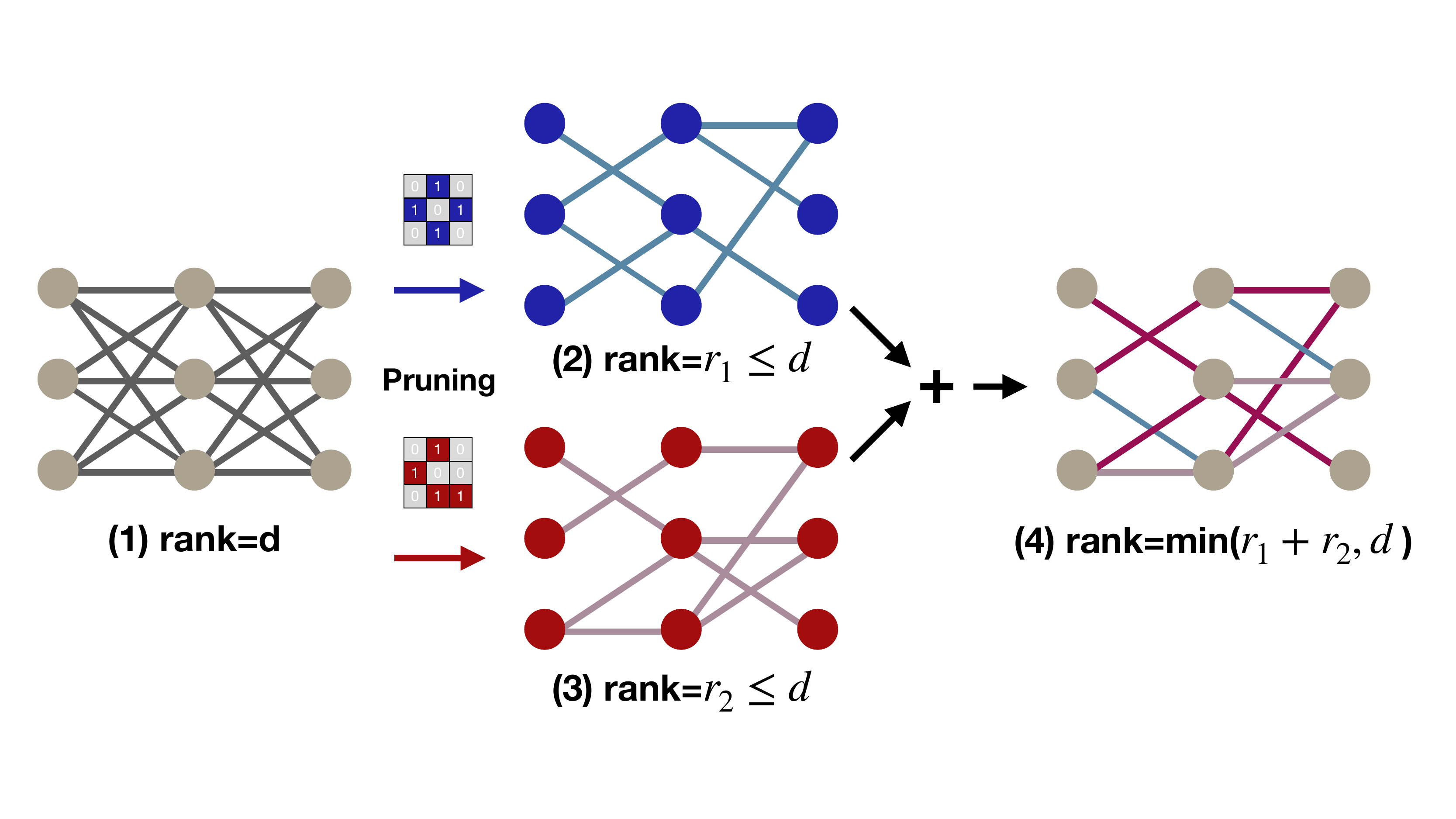}
		\label{fig:diagram}
	}
	\hfill
	\subfigure[]{
		\includegraphics[width=0.4\textwidth]{./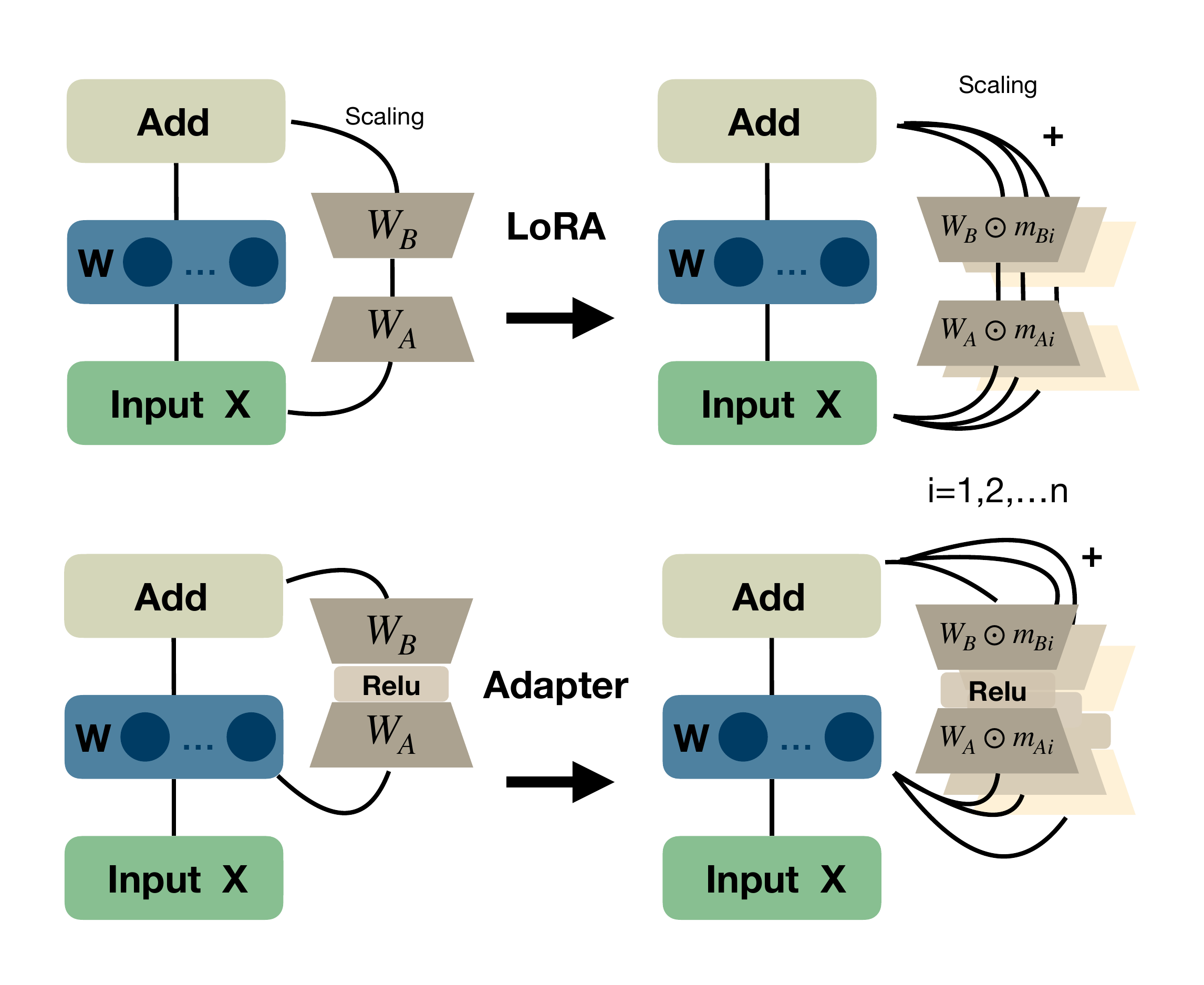}
		\label{fig:lora_adapter}
	}
	\vspace{-1.em}
	\caption{\small
		\textbf{The framework of \algopt.}
		\textbf{(a): Diagram of \algopt learning with $\textbf{d=2}$.}
		After applying blue and red pruning masks to the original $\ww$ in (1), we obtain $\ww \odot \mm_{\text{blue}}$ in (2) and $\ww \odot \mm_{\text{red}}$ in (3), both sharing the same dense weight as (1).
		Since (2) and (3) have common pruned weights, we can exclude common pruned weights from the original $\ww$ and store the sparse weights in (4), benefitting from fewer parameter numbers.
		During the training, we can retrieve weights from (4) and apply respective masks $\{ \mm_i \}$ to obtain weight in (2) and (3).
		\textbf{(b): Diagram of \algopt example in LoRA and Adapter.}
		\looseness=-1
	}
	\vspace{-2em}
\end{figure}

\subsection{\algopt: Increasing Model Capacity for Free!}
In addition to increasing the rank of low-rank-based PEFT methods to enhance model capacity, we explore a complementary approach in this section based on the insights from Theorem~\ref{thm:increase_rank}.
\looseness=-1

\paragraph{A naive solution.}
% \textbf{A naive solution.}
Theorem~\ref{thm:increase_rank} intuitively motivates us to add more parallel trainable parameters.
In detail, the capacity of a general linear layer $\ww$ with $\zz = \ww \xx + \bb$ can be improved by:
\begin{small}
	\begin{align}
		\textstyle
		\zz = \sum_{i=1}^d \ww_{i} \xx + \bb = \left( \tilde{ \ww} := \sum_{i=1}^d \ww_{i} \right) \xx + \bb \,,
	\end{align}
\end{small}%
where $\rank(\tilde{\ww}) \cong d \cdot \rank(\ww)$, once $\{ \ww_i \}$ can satisfy the distribution assumption in Theorem~\ref{thm:increase_rank}.

However, this approach becomes impractical due to the $d-1$ times increase in the number of additional parameters, rendering PEFT methods inefficient.
As a remedy, we introduce the \algopt, a cost-free solution, detailed below.

\paragraph{\algopt framework.}
% \textbf{\algopt framework.}
We leverage the idea of weight-tying to alleviate the parameter-inefficient overhead caused by the naive solution discussed above.
As intuitively visualized in Figure~\autoref{fig:diagram} and equations below, \algopt constructs $\{ \ww_i \}$ for free using diverse pruning masks $\{ \mm_i \}$ with original weights $\ww$:
\begin{small}
	\begin{align}
		\textstyle
		\zz = \sum_{i=1}^d \left( \ww \odot \mm_{i} \right) \xx + \bb = \left( \sum_{i=1}^d \ww \odot \mm_{i} \right)  \xx + \bb  \,,
	\end{align}
\end{small}%
where $d$ is the number of parallel tied models, and $\mm_i$ is the corresponding pruning mask.
Unless specified otherwise, we utilize a sparsity of $0.5$ in each $\mm_i$ by default.
These boolean masks $\{ \mm_i \}$ are distinct, static, and non-trainable across the training, and should be determined before the training phase.
The storage overhead of these boolean masks can be avoided by using \emph{a random generator} to generate deterministic masks on the fly with several scalar seeds per forward and backward pass.

\paragraph{Benefits of \algopt.}
% \textbf{Benefits of \algopt.}
With pruning masks, $\ww \odot \mm_{i}$ and $\ww \odot \mm_{j}$ become different matrices and could potentially increase model rank following Theorem~\ref{thm:increase_rank}. The additional study shown in~\autoref{appex:compare_with_regularized_baselines} suggests that the superiority of \algopt cannot solely be attributed to the implicit regularization induced by the random masking, which emphasizes the advantage of boosting the model capacity.
The \algopt framework not only results in an increased model capacity but also allows a further reduction in the number of parameters and even FLOPs.
The original parameter $\ww$ will be pruned by a ratio of $1-s^d$ with independently generated masks, where $s$ represents the sparsity ratio of masks.
The pruning mask storage can also be avoided by the random generator mentioned above.
An illustration is given in Figure~\autoref{fig:diagram}.
\looseness=-1

Benefiting from the sparse matrix, the inference phase of \algopt enjoys both reduced parameter numbers and FLOPs.
Though the total FLOPS of \algopt during the training is a factor of $s\times d$ to that of a dense layer, it could remain unchanged or lower than the dense one by appropriately selecting values for $s$ and $d$.
Moreover, the computation involving sparse matrices can be accelerated using NVIDIA's sparse tensor core technology for both training and inference~\citep{choquette2021nvidia,zhang2022learning}. In~\autoref{appex:hardware_accelaration}, we demonstrate that \algopt can enjoy the hardware acceleration while preserving the performance gains from the increased model capacity.
\looseness=-1

Therefore, with proper selection of $s$ and $d$, we can implement \algopt nearly for free as it can achieve fewer parameter numbers with similar FLOPs compared to original PEFT methods.
\looseness=-1

% \begin{figure}[!t]
% 	\centering
% 	\centering
% 	\subfigure{
% 		\includegraphics[width=0.55\textwidth]{./figures/figure2a.pdf}
% 		\label{fig:diagram}
% 	}
% 	\hfill
% 	\subfigure{
% 		\includegraphics[width=0.4\textwidth]{./figures/figure2b.pdf}
% 		\label{fig:lora_adapter}
% 	}
% 	\vspace{-1.em}
% 	\caption{\small
% 		\textbf{The framework of \algopt.}
% 		\textbf{Left:} the diagram of \algopt learning with $d=2$.
% 		After applying blue and red pruning masks to the original $\ww$ in (a), we obtain $\ww \odot \mm_{\text{blue}}$ in (b) and $\ww \odot \mm_{\text{red}}$ in (c), both sharing the same dense weight as (a).
% 		Since (b) and (c) have common pruned weights, we can exclude common pruned weights from the original $\ww$ and store the sparse weights in (d), benefitting from fewer parameter numbers.
% 		During the training, we can retrieve weights from (d) and apply respective masks $\{ \mm_i \}$ to obtain weight in (b) and (c).
% 		\textbf{Right:} the diagram of \algopt example in LoRA and Adapter.
% 		\looseness=-1
% 	}
% 	\vspace{-2em}
% \end{figure}

\paragraph{Adapting \algopt to PEFT methods.}
% \textbf{Adapting \algopt to PEFT methods.}
In Figure~\autoref{fig:lora_adapter}, we illustrate how to utilize \algopt as a plugin for various existing algorithms to further enhance the model capacity.
\looseness=-1
\begin{itemize}[nosep, leftmargin=12pt]
	\item \algopt-LoRA (adapt $\zz = \xx (\mW_{\text{pre-trained}} + \mB \mA)$):
	      \begin{small}
		      \begin{align}
			      \textstyle
			      % \yy = \xx (\mW_{\text{pre-trained}} + \mB \mA) \rightarrow 
			      \zz = \xx \mW_{\text{pre-trained}} + \xx \Big( \sum_{i=1}^d (\mB \odot \mm_{b_i})(\mA \odot \mm_{a_i}) \Big) \,.
		      \end{align}
	      \end{small}
	\item \algopt-Adapter / \algopt-Prefix-Tuning (adapt $\zz \!=\! \xx \mW_{\text{pre-trained}} \!+\! f_{\nonlinear}(\xx \mB ) \mA$):
	      \begin{small}
		      \begin{align}
			      \textstyle
			      % \yy = \xx \mW_{\text{pre-trained}} + f_{\nonlinear}(\xx \mB ) \mA \rightarrow 
			      \zz = \xx \mW_{\text{pre-trained}} + \sum_{i=1}^d \Big( f_{\nonlinear} \big( \xx (\mB \odot \mm_{b_i}) \big) \big( \mA \odot \mm_{a_i} \big) \Big) \,.
		      \end{align}
	      \end{small}
\end{itemize}

\begin{wraptable}{r}{0.4\textwidth}
	\centering
	\vspace{-4.4em}
	\caption{\small
		\textbf{\algopt-LoRA always has a higher rank than that of LoRA}.
		The rank is calculated in the RoBERTa-base model after training on the CoLA dataset using three different random seeds.
		$d$ represents the number of parallel tied modules, and $r$ represents the inner dimension of matrices $\mB$ and $\mA$.
		\textit{\#Param} is a factor to that of LoRA ($r=8$).
		\looseness=-1
	}
	\vspace{-1em}
	\label{tab:rank}
	\resizebox{0.4\textwidth}{!}{%
		\renewcommand\arraystretch{1.5}
		\setlength{\tabcolsep}{3pt}
		\begin{tabular}{l|cccc}
			\toprule
			\textbf{Rank of method} & $r=8$ & $r=16$ & $r=32$ & $r=64$ \\
			\midrule
			LoRA                    & 8     & 16     & 32     & 64     \\
			\#Param                 & 1x    & 2x     & 4x     & 8x     \\
			\midrule
			\algopt-LoRA ($d=2$)    & 16    & 32     & 64     & 128    \\
			\#Param                 & 0.75x & 1.5x   & 3x     & 6x     \\
			\midrule
			\algopt-LoRA ($d=4$)    & 32    & 64     & 128    & 256    \\
			\#Param                 & 0.94x & 1.9x   & 3.75x  & 7.5x   \\
			\bottomrule
		\end{tabular}%
	}
	\vspace{-2.25em}
\end{wraptable}

\begin{remark}[The consistency between intuition and practice]
	To verify the idea of rank expansion, we measure the rank in LoRA and \algopt-LoRA separately, denoted as $\rank(\mB\mA)$ and $\rank \left( \sum_{i=1}^d(\mB \odot \mm_{b_i}) (\mA \odot \mm_{a_i}) \right)$.
	Using RoBERTa-base model as the base model following settings in~\citet{zeng2023onenetwork}, we train LoRA and \algopt-LoRA on the CoLA (i.e., GLUE subtask dataset~\citep{wang2018glue}) and compare their ranks at the final epoch.
	The results in~\autoref{tab:rank} consistently show that the rank of \algopt-LoRA is always $d$ times of that in LoRA, regardless of any changes in the inner rank $r$.
	This observation confirms our theoretical intuition.
\end{remark}

\section{Experimental Setup}
We briefly summarize the experimental setup in this section. More details can be found in~\autoref{appex:implementation_details}.

% \paragraph{Datasets.}
% We evaluate \algopt-LoRA and \algopt-PAdapter for a wide range of tasks, including Natural Language Understanding (GLUE~\citep{wang2018glue}), Question Answering (SQuADv1~\citep{rajpurkar2016squad} and SQuADv2~\citep{rajpurkar2018know}), and Computer Vision (VTAB-1k~\citep{zhai2019large}).

\paragraph{Models and Datasets.}
% \textbf{Models and Datasets.}
We conduct a broad range of experiments on various tasks, including (a) Natural Language Understanding (NLU), (b) Question Answering (QA), and (c) Computer Vision (CV).
In NLU, we evaluate the fine-tuning performance of RoBERTa-base~\citep{liu2019roberta} and DeBERTaV3-base~\citep{he2022debertav3} on GLUE~\citep{wang2018glue} benchmark using a diverse array of PEFT algorithms.
In QA, we evaluate the performance of the proposed PEFT method on SQuADv1.1~\citep{rajpurkar2016squad} and SQuADv2.0~\citep{rajpurkar2018know}.
For the CV task, we use ViT-B/16~\citep{dosovitskiy2020image} as our pre-trained model and evaluate the efficacy of our method on the VTAB-1k~\citep{zhai2019large} benchmark.
\looseness=-1

\paragraph{Backbones.}
% \textbf{Backbones.}
For NLU tasks, similar to ProPETL~\citep{zeng2023onenetwork} and AdaLoRA~\citep{zhang2022adaptive}, we implement \algopt-LoRA and \algopt-PAdapter for fine-tuning RoBERTa-base~\citep{liu2019roberta} and DeBERTaV3-base~\citep{he2022debertav3} on GLUE.
During fine-tuning, only parameters in incremental modules and the text classification head are trainable and the other model parameters remain frozen.
For QA tasks, we use the publicly available fine-tuning example scripts from \textit{Adapter-Transformers}~\citep{pfeiffer2020AdapterHub} library and test on DeBERTaV3-base~\citep{he2022debertav3} model.
For CV tasks, the ViT-B/16~\citep{dosovitskiy2020image} model is used to evaluate the efficacy of our methods for experiments based on the VTAB-1k benchmark.
%We use NVIDIA RTX-4090 for all experiments.
\looseness=-1

\paragraph{Baselines.}
% \textbf{Baselines.}
We validate the effectiveness of our \algopt framework by comparing it with the following baselines:
\begin{itemize}[nosep, leftmargin=12pt]
	\item \textit{Fully fine-tuning} is the common practice for adaptation on downstream tasks. All model parameters are trainable during fine-tuning.
	\item \textit{BitFit}~\citep{zaken2022bitfit} only fine-tunes the bias vector in each layer of the model.
	\item \textit{Prefix-Tuning}~\citep{li2021prefix} concatenates two sets of trainable prefix vectors to the keys and values in the attention layer, respectively. Then multi-head attention is computed based on the new prefixed keys and values.
	\item \textit{Adapter tuning} inserts two adapter modules, which are connected consecutively by a nonlinear activation function, between transformer blocks. We consider \textit{HAdapter}~\citep{houlsby2019parameter} and its popular variant \textit{PAdapter}~\citep{pfeiffer2021adapterfusion} as our baselines. The comparison of budget configuration between \textit{HAdapter} and \textit{PAdapter} can be found in~\autoref{appex:nlu}.
	\item \textit{LoRA}~\citep{hu2022lora} parameterizes incremental updates by two low-rank matrices. The number of trainable parameters is solely dependent on the preset rank value $r$ and the depth of the model $n$. \looseness=-1
	\item \textit{AdaLoRA}~\citep{zhang2022adaptive} emphasizes the varying importance of different weights and innovatively proposes to dynamically allocate the parameter budget among different layers during fine-tuning.
	      It is worth noting that AdaLoRA inserts incremental modules into all weight matrices and achieves better performance compared to only fine-tuning queries and values in LoRA\footnote{
		      For the sake of fair comparison with other methods, we append incremental weights to self-attention layers for AdaLoRA in~\autoref{tab:GLUE_RoBERTa_base}.
		      We reuse AdaLoRA results from the prior work~\citep{zhang2022adaptive} in~\autoref{tab:GLUE_DeBERTaV3_base}, which adds incremental weights to all weight matrices.
	      }.
	\item \textit{ProPETL}~\citep{zeng2023onenetwork} shares a single prototype PEFT module across all layers and tasks, and learns diverse binary masks that represent layer-specific and task-specific updates for adaptation. Since PEMN~\citep{bai2022parameter} shares the similar idea with ProPETL and the latter is the current state-of-the-art, we only use ProPETL as our baseline in this work.
\end{itemize}

\paragraph{Setups.}
% \textbf{Setups.}
To broadly compare with other approaches, we replicate the setups used by prior work and reuse reported optimal performance from the original papers as much as possible.
We develop \algopt-LoRA and \algopt-PAdapter in the codebase provided by ProPETL~\citep{zeng2023onenetwork}.
In~\autoref{tab:GLUE_RoBERTa_base}, we follow the evaluation setup as ProPETL~\citep{zeng2023onenetwork} on the RoBERTa-base model, which divides the original development set into two parts: evaluation set and test set. Model selection is performed on the evaluation set\footnote{
	We modify the AdaLoRA codebase to facilitate evaluation under this setup, ensuring a fair comparison. \looseness=-1
}, and the final performance is reported on the test set.
In~\autoref{tab:GLUE_DeBERTaV3_base}, we adopt the evaluation strategy from AdaLoRA~\citep{zhang2022adaptive}, which uses the original development set as the test set and skips model selection.
For all baselines, we use the method-specific hyper-parameters reported in prior work and only tune the learning rate to get the best performance.
Hyper-parameter tuning details defer to the Appendix.
We use NVIDIA RTX-4090 for all experiments.
\looseness=-1

\section{Results}
Our \algopt is able to unleash the potential of PEFT methods across both NLP and CV domains, by increasing the model capacity without introducing additional parameters and computational cost.

\subsection{Natural Language Understanding}
\paragraph{Fine-tuning RoBERTa-base on GLUE.}
% \textbf{Fine-tuning RoBERTa-base on GLUE.}
We summarize the performance of \algopt-LoRA, \algopt-PAdapter, and other baselines on the GLUE benchmark in~\autoref{tab:GLUE_RoBERTa_base}, following the evaluation setup as \textit{ProPETL}~\citep{zeng2023onenetwork} on RoBERTa-base model.
\emph{Both \algopt-LoRA and \algopt-PAdapter achieve better performance compared to their respective counterparts, while containing even less trainable parameters.}
Specifically,
(1) \algopt-PAdapter significantly outperforms the original \textit{PAdapter} ($86.31$ v.s.\ $84.30$) in terms of the average score on the GLUE benchmark, while reducing the trainable parameters by $25.3\%$ ($0.71\%$ v.s.\ $0.95\%$);
(2) \algopt-LoRA also improves the score of \textit{LoRA} by $1.9$ with the same extent of parameter reduction as \algopt-PAdapter.
These results reveal that \emph{\algopt randomly masking weights in parallel tied modules is capable of enhancing the fine-tuning performance on downstream tasks via increasing the actual rank of incremental weights.}
\looseness=-1

We also find the performance of \algopt is competitive compared to state-of-the-art contenders:
(i) \algopt-PAdapter/\algopt-LoRA surpasses over \textit{ProPETL-PAdapter}/\textit{ProPETL-LoRA} in 7 out of 8 downstream tasks;
(ii) \algopt-LoRA can perform just as well as AdaLoRA while using significantly less trainable parameters.
\looseness=-1

\begin{table}[!t]
	\small
	\caption{\small
		\textbf{Performance of all PEFT methods based on RoBERTa-base on the GLUE tasks}.
		We use two parallel tied modules with $0.5$ sparsity here in \algopt-LoRA and \algopt-PAdapter, in which the rank of each returned parameter matrices is doubled.
		The best results on each downstream task are shown in \textbf{bold}.
		Here \textit{\% FT Params} stands for the percentage of trainable parameters relative to that in fully fine-tuning.
		We report Matthew's Correlation for CoLA, average correlation for STS-B, and accuracy for the other tasks.
		Higher is better for all metrics.
		``*'' indicates the results are reported in prior works.
		FLOPs are calculated as a factor relative to that of LoRA.
		Results are averaged over three trials.
		The details of budget configuration can be found in Appendix~\ref{appex:roberta_budget_config}.
		\looseness=-1
	}
	\vspace{-1em}
	\label{tab:GLUE_RoBERTa_base}
	\centering
	\resizebox{1.\textwidth}{!}{%
		\renewcommand\arraystretch{1.4}
		\setlength{\tabcolsep}{3pt}
		\begin{tabular}{l|r|r|cccccccc|c}
			\toprule
			\multirow{2}*{\bf Methods} & \multirow{2}*{ \bf \parbox{1cm}{\centering \% FT                                                                                                                                                                                                                                                                                                                                                                                                                                               \\ Params} } & \multirow{2}*{ \bf \parbox{1cm}{\centering \bf FLOPs} }  & \textbf{CoLA}                         & \textbf{SST-2}                        & \textbf{MRPC}                                                             & \textbf{QQP}                                                              & \textbf{STS-B}                                                            & \textbf{MNLI}                         & \textbf{QNLI}                         & \textbf{RTE}                          & \textbf{All} \\
			~                          &                                                  &         & {M corr.}                                      & {Acc.}                                         & {Acc.}                                        & {Acc.}                               & {Corr.}                                       & {Acc.}                                & {Acc.}                                         & {Acc.}                                         & {Avg.}                                         \\
			\midrule
			Fine-tuning*               & $100.00\%$                                       & -       & $60.94$                                        & $94.04$                                        & $87.25$                                       & $\mathbf{91.34}$                     & $90.83$                                       & $\mathbf{87.57}$                      & $92.53$                                        & $73.14$                                        & $84.71$                                        \\
			Prefix-Tuning*             & $0.95\%$                                         & -       & $63.27$                                        & $94.42$                                        & $89.05$                                       & $88.86$                              & $90.43$                                       & $85.76$                               & $91.46$                                        & $63.79$                                        & $83.38$                                        \\
			ProPETL-Prefix-Tuning*     & $1.03\%$                                         & -       & $61.81$                                        & $94.00$                                        & $87.42$                                       & $88.85$                              & $90.48$                                       & $85.73$                               & $91.05$                                        & $63.79$                                        & $82.89$                                        \\
			\midrule
			LoRA*                      & $0.95\%$                                         & $1$x    & $62.24$                                        & $93.81$                                        & $86.76$                                       & $88.79$                              & $90.61$                                       & $86.59$                               & $91.85$                                        & $67.63$                                        & $83.54$                                        \\
			ProPETL-LoRA*              & $1.04\%$                                         & $1.02$x & $62.16$                                        & $93.62$                                        & $88.73$                                       & $87.59$                              & $90.88$                                       & $85.30$                               & $91.75$                                        & $72.66$                                        & $84.09$                                        \\
			AdaLoRA                    & $1.07\%$                                         & $1.44$x & $63.86_{ \transparent{0.5} \pm 0.7 }$          & $94.11_{ \transparent{0.5} \pm 0.1 }$          & $88.24_{ \transparent{0.5} \pm 1.1}$          & $90.03_{ \transparent{0.5} \pm 0.1}$ & $91.36_{ \transparent{0.5} \pm 0.2}$          & $87.08_{ \transparent{0.5} \pm 0.1 }$ & $92.33_{ \transparent{0.5} \pm 0.1 }$          & $76.49_{ \transparent{0.5} \pm 2.0 }$          & $85.44_{ \transparent{0.5} \pm 0.2 }$          \\
			\algopt-LoRA               & $\mathbf{0.71\%}$                                & $1$x    & $63.74_{ \transparent{0.5} \pm 0.6 }$          & $94.19_{ \transparent{0.5} \pm 0.1 }$          & $88.24_{ \transparent{0.5} \pm 0.0}$          & $90.88_{ \transparent{0.5} \pm 0.1}$ & $\mathbf{91.52}_{ \transparent{0.5} \pm 0.2}$ & $87.40_{ \transparent{0.5} \pm 0.3 }$ & $\mathbf{92.60}_{ \transparent{0.5} \pm 0.2 }$ & $75.06_{ \transparent{0.5} \pm 2.4 }$          & $85.45_{ \transparent{0.5} \pm 0.3 }$          \\
			% \algopt-LoRA ($r=32$, \#layer=4)      & $0.95\%$                                         & $62.64_{ \transparent{0.5} \pm 2.6 }$ & $94.23_{ \transparent{0.5} \pm 0.4 }$ & $88.07_{ \transparent{0.5} \pm 0.6} / 91.33_{ \transparent{0.5} \pm 0.3}$ & $90.89_{ \transparent{0.5} \pm 0.3} / 87.80_{ \transparent{0.5} \pm 0.1}$ & $91.30_{ \transparent{0.5} \pm 0.2} / 91.06_{ \transparent{0.5} \pm 0.1}$ & $87.11_{ \transparent{0.5} \pm 0.0 }$ & $92.49_{ \transparent{0.5} \pm 0.3 }$ & $73.86_{ \transparent{0.5} \pm 2.7 }$ & $86.43$ \\
			\midrule
			PAdapter*                  & $0.95\%$                                         & $1$x    & $63.15$                                        & $94.00$                                        & $86.93$                                       & $89.78$                              & $90.74$                                       & $87.10$                               & $92.23$                                        & $70.50$                                        & $84.30$                                        \\
			ProPETL-PAdapter*          & $1.04\%$                                         & $1.03$x & $65.43$                                        & $94.15$                                        & $88.24$                                       & $89.40$                              & $91.14$                                       & $86.53$                               & $92.58$                                        & $76.50$                                        & $85.50$                                        \\
			\algopt-PAdapter           & $\mathbf{0.71\%}$                                & $1$x    & $\mathbf{66.88}_{ \transparent{0.5} \pm 0.5 }$ & $\mathbf{94.50}_{ \transparent{0.5} \pm 0.5 }$ & $\mathbf{89.38}_{ \transparent{0.5} \pm 1.2}$ & $91.13_{ \transparent{0.5} \pm 0.0}$ & $91.09_{ \transparent{0.5} \pm 0.1}$          & $87.30_{ \transparent{0.5} \pm 0.1 }$ & $92.59_{ \transparent{0.5} \pm 0.2 }$          & $\mathbf{77.94}_{ \transparent{0.5} \pm 2.4 }$ & $\mathbf{86.35}_{ \transparent{0.5} \pm 0.4 }$ \\
			% \algopt-PAdapter ($bn=64$, \#layer=4) & $0.95\%$                                         & $65.70_{ \transparent{0.5} \pm 3.3 }$ & $94.57_{ \transparent{0.5} \pm 0.2 }$ & $89.38_{ \transparent{0.5} \pm 0.8} / 92.42_{ \transparent{0.5} \pm 0.5}$ & $91.15_{ \transparent{0.5} \pm 0.1} / 88.10_{ \transparent{0.5} \pm 0.1}$ & $91.26_{ \transparent{0.5} \pm 0.1} / 90.95_{ \transparent{0.5} \pm 0.1}$ & $86.94_{ \transparent{0.5} \pm 0.1 }$ & $92.63_{ \transparent{0.5} \pm 0.2 }$ & $75.54_{ \transparent{0.5} \pm 0.6 }$ & $87.15$ \\
			\bottomrule
		\end{tabular}%
	}
	\vspace{-1.em}
\end{table}

\paragraph{Fine-tuning DeBERTaV3-base on GLUE.}
% \textbf{Fine-tuning DeBERTaV3-base on GLUE.}
We repeat experiments based on the DeBERTaV3-base model and report results on the GLUE development set in~\autoref{tab:GLUE_DeBERTaV3_base}.
\emph{We observe that \algopt achieves better or on-par overall performance on the GLUE benchmark compared with existing approaches.}
We examine the capability of \algopt in situations with constrained parameter budgets by cutting the bottleneck dimensions of \algopt-PAdapter by half, leading to only $0.24\%$ trainable parameters relative to that in the fully fine-tuning.
The superior performance of \algopt-PAdapter, despite having only half the bottleneck dimension compared to other baselines, highlights its ability to effectively leverage shared parameters in parallel tied modules and learn more powerful features for adaptation.

\begin{table}[!t]
	\small
	\caption{\small
		\textbf{Performance of all PEFT methods based on DeBERTaV3-base on the GLUE tasks}.
		We use two parallel tied modules with $0.5$ sparsity here in \algopt-LoRA and \algopt-PAdapter, in which the rank of each returned parameter matrices is doubled.
		The best results on each downstream task are shown in \textbf{bold}.
		Here \textit{\% FT Params} stands for the percentage of trainable parameters relative to that in fully fine-tuning.
		We report Matthew's Correlation for CoLA, average correlation for STS-B, and accuracy for the other tasks. Higher is better for all metrics.
		``*'' indicates the results are reported in prior works.
		FLOPs are calculated as a factor relative to that of LoRA.
		\algopt-PAdapter only has a bottleneck dimension of $32$ to simulate the constrained parameter budget situation, while the other PAdapter variants have a bottleneck dimension of $64$.
		Results are averaged over three trials.
		More details of budget configuration can be found in Appendix~\ref{appex:deberta_budget_config}.
		\looseness=-1
	}
	\vspace{-1em}
	\label{tab:GLUE_DeBERTaV3_base}
	\centering
	\resizebox{1.\textwidth}{!}{%
		\renewcommand\arraystretch{1.4}
		\setlength{\tabcolsep}{3pt}
		\begin{tabular}{l|r|r|cccccccc|c}
			\toprule
			\multirow{2}*{\bf Methods} & \multirow{2}*{ \bf \parbox{1cm}{\centering \% FT                                                                                                                                                                                                                                                                                                                                                                                                                                         \\ Params} } & \multirow{2}*{ \bf \parbox{1cm}{\centering \bf FLOPs} } & \bf CoLA                              & \bf SST-2                             & \bf MRPC                              & \bf QQP                               & \bf STS-B                             & \bf MNLI                              & \bf QNLI                              & \bf RTE                               & \bf All                               \\
			~                          &                                                  &         & {M corr.}                                      & {Acc.}                                         & {Acc.}                                         & {Acc.}                                & {Corr.}                                        & {Acc.}                                         & {Acc.}                                & {Acc.}                                & {Avg.}                                         \\
			\midrule
			Fine-tuning*               & $100.00\%$                                       & -       & $69.19$                                        & $95.63$                                        & $89.46$                                        & $\mathbf{92.40}$                      & $91.60$                                        & $89.90$                                        & $94.03$                               & $83.75$                               & $88.25$                                        \\
			BitFit*                    & $\mathbf{0.05\%}$                                & -       & $66.96$                                        & $94.84$                                        & $87.75$                                        & $88.41$                               & $91.35$                                        & $89.37$                                        & $92.24$                               & $78.70$                               & $86.20$                                        \\
			\midrule
			LoRA*                      & $0.72\%$                                         & $1$x    & $69.82$                                        & $94.95$                                        & $89.95$                                        & $91.99$                               & $91.60$                                        & $90.65$                                        & $93.87$                               & $85.20$                               & $88.50$                                        \\
			AdaLoRA*                   & $0.69\%$                                         & $2.15$x & $71.45$                                        & $96.10$                                        & $90.69$                                        & $92.23$                               & $91.84$                                        & $90.76$                                        & $\mathbf{94.55}$                      & $\mathbf{88.09}$                      & $89.46$                                        \\
			ProPETL-LoRA               & $0.78\%$                                         & $1.02$x & $69.12_{ \transparent{0.5} \pm 0.8 }$          & $95.72_{ \transparent{0.5} \pm 0.2 }$          & $90.44_{ \transparent{0.5} \pm 0.3 }$          & $91.10_{ \transparent{0.5} \pm 0.1 }$ & $91.52_{ \transparent{0.5} \pm 0.1 }$          & $90.29_{ \transparent{0.5} \pm 0.1 }$          & $94.28_{ \transparent{0.5} \pm 0.1 }$ & $85.20_{ \transparent{0.5} \pm 0.8 }$ & $88.46_{ \transparent{0.5} \pm 0.2 }$          \\
			\algopt-LoRA               & $0.54\%$                                         & $1$x    & $72.22_{ \transparent{0.5} \pm 2.2 }$          & $95.80_{ \transparent{0.5} \pm 0.4 }$          & $91.42_{ \transparent{0.5} \pm 1.0 }$          & $91.68_{ \transparent{0.5} \pm 0.0 }$ & $91.99_{ \transparent{0.5} \pm 0.1 }$          & $\mathbf{91.10}_{ \transparent{0.5} \pm 0.0 }$ & $93.83_{ \transparent{0.5} \pm 0.2 }$ & $86.64_{ \transparent{0.5} \pm 0.3 }$ & $89.33_{ \transparent{0.5} \pm 0.3 }$          \\
			% $\text{\algopt-LoRA}_{\text{layer}=4}$ ($r=8$)  &                                                  & $71.47_{ \transparent{0.5} \pm 0.8 }$ &                                       & $91.01_{ \transparent{0.5} \pm 2.3 }$ &                                       & $91.71_{ \transparent{0.5} \pm 0.4 }$ &                                       &                                       &                                       &                                       \\
			\midrule
			HAdapter*                  & $0.66\%$                                         & $1.03$x & $68.64$                                        & $95.53$                                        & $89.95$                                        & $91.91$                               & $91.48$                                        & $90.13$                                        & $94.11$                               & $84.48$                               & $88.28$                                        \\
			PAdapter*                  & $0.64\%$                                         & $0.99$x & $68.77$                                        & $95.61$                                        & $89.46$                                        & $92.04$                               & $91.54$                                        & $90.33$                                        & $94.29$                               & $85.20$                               & $88.41$                                        \\
			ProPETL-PAdapter           & $0.69\%$                                         & $1.02$x & $70.47_{ \transparent{0.5} \pm 0.8 }$          & $95.99_{ \transparent{0.5} \pm 0.1 }$          & $90.44_{ \transparent{0.5} \pm 0.3 }$          & $91.74_{ \transparent{0.5} \pm 0.0 }$ & $91.48_{ \transparent{0.5} \pm 0.0 }$          & $90.46_{ \transparent{0.5} \pm 0.1 }$          & $94.32_{ \transparent{0.5} \pm 0.0 }$ & $85.92_{ \transparent{0.5} \pm 0.3 }$ & $88.90_{ \transparent{0.5} \pm 0.1 }$          \\
			\algopt-PAdapter           & $0.24\%$                                         & $0.5$x  & $\mathbf{73.13}_{ \transparent{0.5} \pm 1.0 }$ & $\mathbf{96.22}_{ \transparent{0.5} \pm 0.2 }$ & $\mathbf{92.16}_{ \transparent{0.5} \pm 1.2 }$ & $92.26_{ \transparent{0.5} \pm 0.0 }$ & $\mathbf{92.37}_{ \transparent{0.5} \pm 0.2 }$ & $90.12_{ \transparent{0.5} \pm 0.1 }$          & $94.12_{ \transparent{0.5} \pm 0.1 }$ & $87.05_{ \transparent{0.5} \pm 1.8 }$ & $\mathbf{89.68}_{ \transparent{0.5} \pm 0.4 }$ \\
			% \algopt-PAdapter           & $0.48\%$                                         & $1$x  & $\mathbf{73.13}_{ \transparent{0.5} \pm 1.0 }$ & $\mathbf{96.22}_{ \transparent{0.5} \pm 0.1 }$ & $\mathbf{91.42}_{ \transparent{0.5} \pm 0.1 }$   & $92.29_{ \transparent{0.5} \pm 0.0 }$ & $\mathbf{92.37}_{ \transparent{0.5} \pm 0.2 }$ & $90.16_{ \transparent{0.5} \pm 0.1 }$          & $94.18_{ \transparent{0.5} \pm 0.1 }$ & $87.55_{ \transparent{0.5} \pm 0.9 }$ & $\mathbf{89.68}_{ \transparent{0.5} \pm 0.4 }$ \\
			% $\text{\algopt-PAdapter}_{\text{layer}=2}$      & $0.64\%$                                         & $73.13_{ \transparent{0.5} \pm 1.0 }$ & $95.95_{ \transparent{0.5} \pm 0.2 }$ & $92.16_{ \transparent{0.5} \pm 1.2 }$ & $92.26_{ \transparent{0.5} \pm 0.0 }$ & $92.37_{ \transparent{0.5} \pm 0.2 }$ & $89.91_{ \transparent{0.5} \pm 0.1 }$ & $93.92_{ \transparent{0.5} \pm 0.1 }$ & $87.36_{ \transparent{0.5} \pm 1.7 }$ & $89.59$                               \\
			% $\text{\algopt-PAdapter}_{\text{layer}=4}$      &                                                  & $72.47_{ \transparent{0.5} \pm 1.0 }$ & $95.91_{ \transparent{0.5} \pm 0.5 }$ & $91.67_{ \transparent{0.5} \pm 0.4 }$ &                                       & $92.42_{ \transparent{0.5} \pm 0.0 }$ &                                       &                                       & $87.05_{ \transparent{0.5} \pm 1.0 }$ &                                       \\
			\bottomrule
		\end{tabular}%
	}
	\vspace{-1.em}
\end{table}

\begin{table}[!t]
	\small
	\caption{\small
		\textbf{\algopt-PAdapter results with DeBERTaV3-base on SQuAD v1.1 and SQuADv2.0.}
		We use two parallel tied modules with $0.5$ sparsity here in \algopt-PAdapter.
		% We use two parallel tied modules with $0.5$ sparsity here in \algopt-PAdapter, in which the rank of each returned parameter matrices is doubled.
		Here \textit{\% FT Params} is the number of trainable parameters relative to that in full fine-tuning.
		We report EM/F1 for both tasks.
		The best results in each setting are shown in \textbf{bold}.
		Higher is better for all metrics.
		\algopt-PAdapter achieves the best overall performance compared to other baselines, \textit{$1.0\%$ / $0.7\%$} higher than LoRA, the second best-performing method on SQuAD datasets, while containing less trainable parameters.
		\looseness=-1
	}
	\vspace{-1em}
	\label{tab:squad_experiemnts}
	\centering
	\resizebox{.8\textwidth}{!}{%
		\renewcommand\arraystretch{1.5}
		\setlength{\tabcolsep}{3pt}
		\begin{tabular}{l|cccc|cccc|c}
			\toprule
			                          & \multicolumn{4}{|c}{\bf SQuADv1.1} & \multicolumn{4}{|c}{\bf SQuADv2.0} & \multicolumn{1}{|c}{\bf Avg}                                                                                                             \\
			\midrule
			{\small Full FT}          & \multicolumn{4}{|c}{86.0 / 92.7}   & \multicolumn{4}{|c}{85.4 / 88.4}   & \multicolumn{1}{|c}{85.7 / 90.6}                                                                                                         \\
			\midrule
			\midrule
			{\small \% FT Params}     & {0.08\%}                           & {0.16\%}                           & {0.32\%}                         & {0.65\%}        & {0.08\%}    & {0.16\%}        & {0.32\%}        & {0.65\%}        & -               \\
			\midrule
			{\small HAdapter}         & {84.4/91.5}                        & {85.3/92.1}                        & {86.1/92.7}                      & {86.7/92.9}     & {83.4/86.6} & {84.3/87.3}     & {84.9/87.9}     & {85.4/88.3}     & {85.1/89.9}     \\
			{\small PAdapter}         & {84.4/91.7}                        & {85.9/92.5}                        & {86.2/92.8}                      & {86.6/93.0}     & {84.2/87.2} & {84.5/87.6}     & {84.9/87.8}     & {84.5/87.5}     & {85.2/90.0}     \\
			{\small LoRA}             & {86.4/92.8}                        & {86.6/92.9}                        & {86.7/93.1}                      & {86.7/93.1}     & {84.7/87.5} & {83.6/86.7}     & {84.5/87.4}     & {85.0/88.0}     & {85.5/90.2}     \\
			\midrule
			\midrule
			{\small \% FT Params}     & {0.06\%}                           & {0.12\%}                           & {0.24\%}                         & {0.49\%}        & {0.06\%}    & {0.12\%}        & {0.24\%}        & {0.49\%}        & -               \\
			\midrule
			{\small \algopt-PAdapter} & \bf {87.6/93.5}                    & \bf {87.3/93.4}                    & \bf {87.4/93.4}                  & \bf {87.7/93.5} & {85.1/87.9} & \bf {85.5/88.5} & \bf {85.8/88.6} & \bf {85.8/88.6} & \bf {86.5/90.9} \\
			\bottomrule
		\end{tabular}
	}
	\vspace{-1em}
\end{table}

\begin{table}[!t]
	\small
	\caption{\small
		\textbf{\algopt-LoRA for parameter-efficient transfer learning on VTAB-1k}.
		\algopt-LoRA achieves better overall performance, \emph{$1\%$ higher than LoRA}, while the parameter numbers remain the same.
		We follow the experimental settings in~\cite{zhang2022neural}.
		The base model used is ViT-B/16 pre-trained on ImageNet-22K, and LoRA/\algopt-LoRA contain 0.29M/0.22M trainable parameters.
		Both LoRA and \algopt-LoRA have inner dimension $r=8$ and are trained for $100$ epochs. The result of VPT-Deep and NOAH are collected from
		\cite{jia2022visual} and \cite{zhang2022neural} respectively.
		Results are averaged over three trials.
		\looseness=-1
	}
	\vspace{-1em}
	\label{tab:vtab1k}
	\centering
	\resizebox{1.\textwidth}{!}{%
		\renewcommand\arraystretch{1.5}
		\setlength{\tabcolsep}{3pt}
		\begin{tabular}{lccccccccccccc}
			\toprule
			\multirow{2}{*}{ \textbf{Model Name} } & \multicolumn{5}{c}{ \textbf{Natural} } & \multicolumn{2}{c}{ \textbf{Specialized} }     & \multicolumn{5}{c}{ \textbf{Structured} } & \multirow{2}{*}{ \textbf{Average} }                                                                                                                                                                                                                                                                                                                                                                                                            \\ \cmidrule(lr){2-6} \cmidrule(lr){7-8} \cmidrule(lr){9-13}
			                                       & \textbf{CIFAR-100}                     & \textbf{DTD}                                   & \textbf{Flower102}                        & \textbf{Pets}                                  & \textbf{SVHN}                                  & \textbf{Eurosat}                      & \textbf{Resisc45}                              & \textbf{Clevr-Count}                           & \textbf{Clevr-Dist}                            & \textbf{DMLab}                                 & \textbf{dSpr-Ori}                     & \textbf{sNORB-Azim}                   & \textbf{}        \\
			\midrule
			VPT-Deep                               & $\textbf{78.80}$                       & $65.80$                                        & $98.00$                                   & $88.30$                                        & $78.10$                                        & $\textbf{96.10}$                      & $83.40$                                        & $68.50$                                        & $60.00$                                        & $46.50$                                        & $47.90$                               & $\textbf{32.90}$                      & $70.36$          \\
			\midrule
			NOAH                                   & $69.60$                                & $70.20$                                        & $\textbf{99.10}$                          & $90.40$                                        & $86.10$                                        & $95.40$                               & $83.90$                                        & $\textbf{82.80}$                               & $68.90$                                        & $49.90$                                        & $\textbf{48.30}$                      & $32.80$                               & $73.11$          \\
			\midrule
			LoRA                                   & $67.10_{ \transparent{0.5} \pm 0.3 }$  & $69.89_{ \transparent{0.5} \pm 0.5 }$          & $98.95_{ \transparent{0.5} \pm 0.1 }$     & $90.58_{ \transparent{0.5} \pm 0.3 }$          & $84.20_{ \transparent{0.5} \pm 0.7 }$          & $95.34_{ \transparent{0.5} \pm 0.2 }$ & $85.93_{ \transparent{0.5} \pm 0.1 }$          & $82.68_{ \transparent{0.5} \pm 0.3 }$          & $67.92_{ \transparent{0.5} \pm 0.5 }$          & $49.68_{ \transparent{0.5} \pm 0.2 }$          & $45.54_{ \transparent{0.5} \pm 0.2 }$ & $30.80_{ \transparent{0.5} \pm 0.4 }$ & $72.38$          \\
			\midrule
			ProPETL-LoRA                           & $69.50_{ \transparent{0.5} \pm 0.4 }$  & $70.48_{ \transparent{0.5} \pm 0.5 }$          & $98.91_{ \transparent{0.5} \pm 0.0 }$     & $90.84_{ \transparent{0.5} \pm 0.2 }$          & $85.64_{ \transparent{0.5} \pm 0.9 }$          & $95.41_{ \transparent{0.5} \pm 0.3 }$ & $85.35_{ \transparent{0.5} \pm 0.1 }$          & $81.99_{ \transparent{0.5} \pm 0.2 }$          & $67.26_{ \transparent{0.5} \pm 0.4 }$          & $49.79_{ \transparent{0.5} \pm 0.7 }$          & $46.97_{ \transparent{0.5} \pm 0.5 }$ & $28.74_{ \transparent{0.5} \pm 0.3 }$ & $72.57$          \\
			\midrule
			\algopt-LoRA (d=2)                     & $69.28_{ \transparent{0.5} \pm 0.5 }$  & $71.45_{ \transparent{0.5} \pm 0.3 }$          & $99.06_{ \transparent{0.5} \pm 0.1 }$     & $\textbf{91.21}_{ \transparent{0.5} \pm 0.3 }$ & $85.01_{ \transparent{0.5} \pm 1.1 }$          & $95.55_{ \transparent{0.5} \pm 0.2 }$ & $\textbf{86.48}_{ \transparent{0.5} \pm 0.1 }$ & $\textbf{82.80}_{ \transparent{0.5} \pm 0.2 }$ & $69.05_{ \transparent{0.5} \pm 0.4 }$          & $50.58_{ \transparent{0.5} \pm 0.8 }$          & $46.04_{ \transparent{0.5} \pm 0.1 }$ & $31.94_{ \transparent{0.5} \pm 0.2 }$ & $\textbf{73.20}$ \\
			\midrule
			\algopt-LoRA (d=4)                     & $69.01_{ \transparent{0.5} \pm 0.1 }$  & $\textbf{72.18}_{ \transparent{0.5} \pm 0.6 }$ & $99.06_{ \transparent{0.5} \pm 0.0 }$     & $91.20_{ \transparent{0.5} \pm 0.2 }$          & $\textbf{87.07}_{ \transparent{0.5} \pm 0.7 }$ & $95.76_{ \transparent{0.5} \pm 0.1 }$ & $85.96_{ \transparent{0.5} \pm 0.2 }$          & $82.77_{ \transparent{0.5} \pm 0.3 }$          & $\textbf{69.11}_{ \transparent{0.5} \pm 0.3 }$ & $\textbf{50.90}_{ \transparent{0.5} \pm 0.9 }$ & $45.84_{ \transparent{0.5} \pm 0.3 }$ & $32.09_{ \transparent{0.5} \pm 0.3 }$ & $\textbf{73.41}$ \\
			\bottomrule
		\end{tabular}%
	}
	\vspace{-1.5em}
\end{table}

\subsection{Question Answering}
To verify \algopt can also work in more difficult tasks, we evaluate our method on two QA datasets: SQuADv1.1 and SQuADv2.0, under four different budget settings: $0.08\%$, $0.16\%$, $0.32\%$, and $0.65\%$.
The results are presented in~\autoref{tab:squad_experiemnts}.
We find that
\begin{itemize}[leftmargin=12pt, nosep]
	\item
	      \emph{\algopt-PAdapter outperforms other baselines, including fully fine-tuning with all model parameters, in all budget settings except for a $0.08\%$ parameter budget on SQuADv2.}
	      Moreover, it achieves this superiority while being more parameter-efficient, requiring $25\%$ fewer parameters.
	      The average score improvement compared to the second best-performing method, LoRA, is $1.0$/$0.7$.
	      \looseness=-1
	\item \textit{HAdapter} and \textit{PAdapter} degrade dramatically when reducing the number of trainable parameters on SQuADv1.1, while our method shows consistent performance under different budget levels.
	\item In the most constrained budget setting on SQuADv2, a more challenging dataset, \algopt-PAdapter falls short of surpassing fully fine-tuning.
	      We believe that in resource-constrained scenarios for demanding tasks, existing PEFT methods require more efficient parameter allocation across different layers.
	      We leave the adaptive allocation of \algopt over layers for future work.
\end{itemize}

\subsection{Visual Task Adaptation Benchmark}
We further evaluate the efficacy of \algopt on the benchmark with a different modality, VTAB-1k~\citep{zhai2019large}, which includes several visual tasks from three categories: natural, specialized, and structured.
Experimental results summarized in~\autoref{tab:vtab1k} show that \emph{our method($d$=2) brings consistent improvements over LoRA on all downstream tasks, leading to an average improvement of around $1\%$ on VTAB-1k, while maintaining the same or less storage and computation efficiency.}
More parallel tied modules achieve larger performance gain from $73.20$ to $73.41$ as evidenced in~\autoref{tab:vtab1k}, which shows vision tasks enjoy more benefits from higher rank updates.
\looseness=-1

\begin{figure}[!t]
	\centering
	\subfigure[\small Mask Density]{
		\includegraphics[width=0.45\textwidth]{./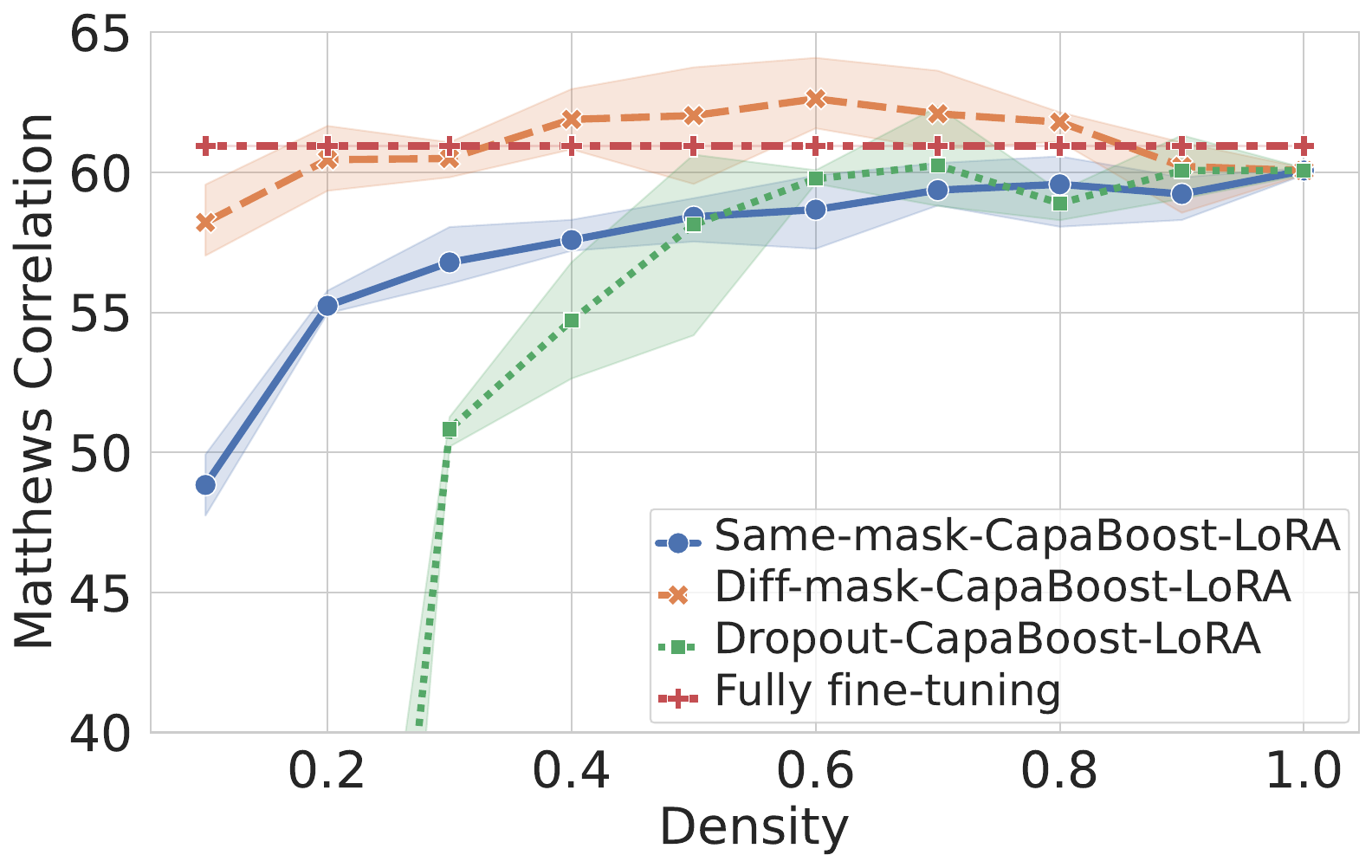}
		\label{fig:ablation-density}
	}
	\hfill
	\subfigure[\small Inner Dimension $r$]{
		\includegraphics[width=0.45\textwidth]{./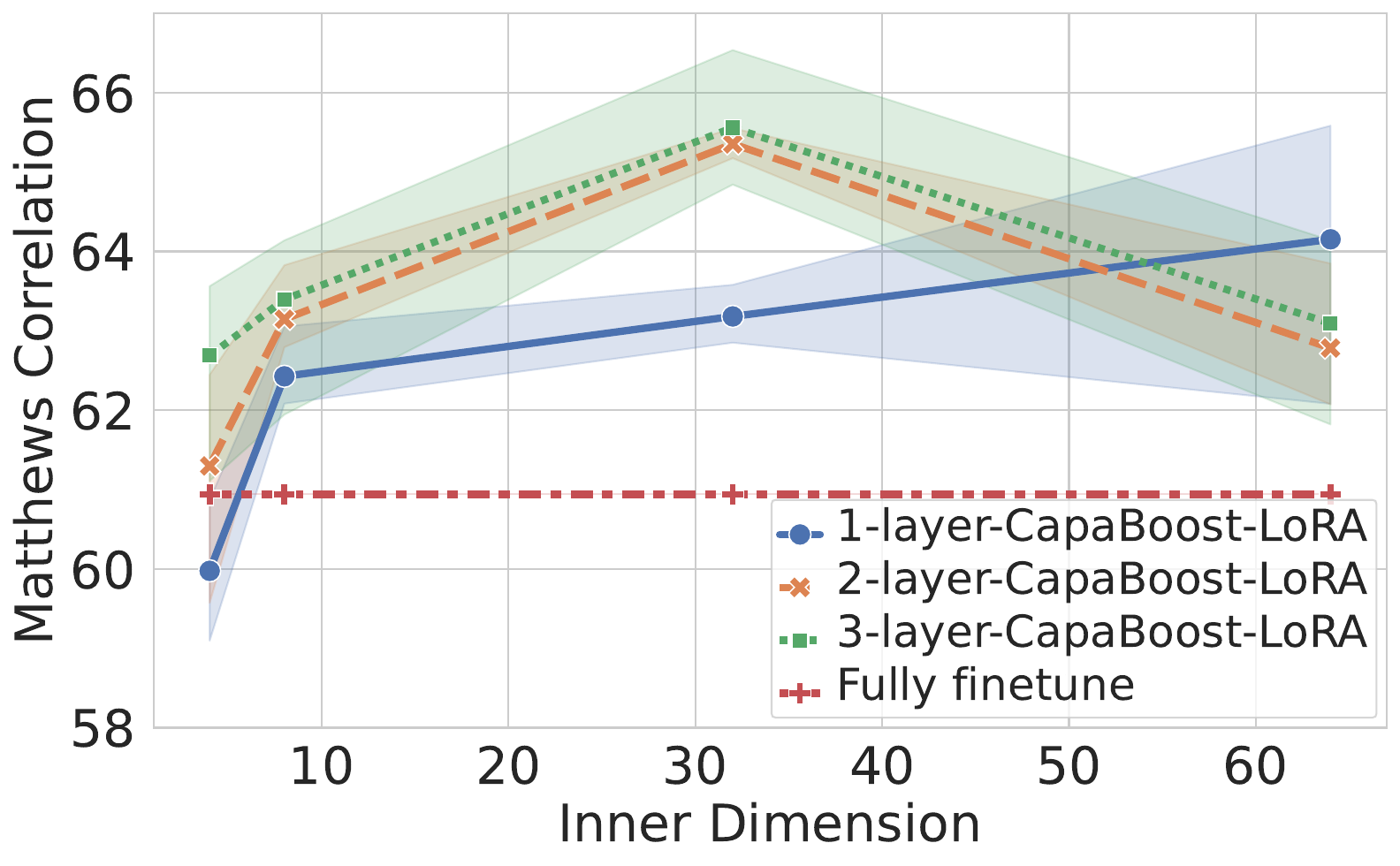}
		\label{fig:ablation-rank}
	}
	\vspace{-1.5em}
	\caption{\small
		\textbf{Ablation study for components of \algopt-LoRA on RoBERTa-base model.}
		\textbf{(a)} Average performance of \algopt-LoRA with different pruning masks, same pruning mask, and only Dropout without pruning over different sparsity on CoLA dataset. We use two parallel tied modules with a preset LoRA inner dimension of $8$.
		\textbf{(b)} Average performance of \algopt-LoRA under different rank values and number of parallel tied modules when density$=0.5$ on CoLA dataset. Results are averaged over three trials.
	}
	\label{fig:ablation_study_our_method_roberta}
	\vspace{-1.75em}
\end{figure}

\section{Discussion}
\label{sec:ablation_study}

In this section, we present ablation study results based on the CoLA dataset to validate the effectiveness of our method.
Additional ablation studies on the SST-2 dataset can be found in~\autoref{appex:additional_ablation}.

\paragraph{Different mask sparsity.}
% \textbf{Different mask sparsity.}
Figure~\ref{fig:ablation-density} illustrates the results of fine-tuning the RoBERTa-base model with different pruning mask sparsity on CoLA.
Notably, we observe that \algopt-LoRA reaches peak performance when the density of random masks equals $0.6$.
This observation highlights the crucial role of random masks in \algopt, as they enforce non-linearity among parallel layers and effectively increase the actual rank of weight matrices, thereby enhancing the performance.
\looseness=-1

However, we observe a continuous degradation in the performance of \algopt-LoRA when we vary the sparsity level in each parallel tied module.
We attribute this trend to the diminishing effect of model capacity enhancement when the sparsity level becomes either too high or too low.
Specifically, when the density is set to $1.0$, \algopt-LoRA is equivalent to the original LoRA multiplied by a coefficient.
Conversely, at a density of $0.1$, the rank of each parallel weight matrix is insufficient to effectively enhance model capacity.

\paragraph{On the influence of mask types.}
% \textbf{On the influence of mask types.}
Dropout~\citep{hinton2012improving} addresses overfitting by randomly setting parameters or activations to zero~\citep{wan2013regularization,kingma2015variational,gal2016dropout}, similar to our masking to some extent.
However, our method aims to boost the capacity of dense weights through learning with deterministic binary masks, rather than relying on regularization.
\looseness=-1

Figure~\ref{fig:ablation-density} validates the efficacy of our design choice, \textit{Diff-mask}, by comparing it with two alternatives: \textit{Same-mask} and \textit{Dropout}.
Our findings reveal that \textit{Diff-mask} consistently outperforms both alternatives across different sparsity levels, suggesting that different random but deterministic masks in \algopt play vital roles in the performance gain.
In the case of employing new random weight masking in each iteration like \textit{Dropout}, it does result in poor performance when the density is low.
\looseness=-1
%indicating that the model struggles to effectively fine-tune shared dense weights for adaptation.
%In contrast, our method, ``Diff-mask-\algopt-LoRA'', overcomes this issue by learning with deterministic binary masks during fine-tuning.

\paragraph{Tradeoff between parameter budget and the number of parallel tied modules.}
% \textbf{Tradeoff between parameter budget and the number of parallel tied modules.}
As discussed in Section~\ref{sec:motivation}, the rank of incremental weight matrix bottlenecks the model capacity, especially in the case with a limited parameter budget.
We illustrate this point by conducting an ablation study over the inner dimension $r$ of LoRA modules as shown in Figure~\ref{fig:ablation-rank}.
We find that
\begin{itemize}[nosep, leftmargin=12pt]
	\item The performance of \algopt-LoRA with one single layer on CoLA monotonically increases with the inner dimension $r$.
	      It is aligned with our intuition in Section~\ref{sec:motivation} that the performance is limited by the rank of incremental weight matrices when the model capacity is insufficient. \looseness=-1
	\item \algopt-LoRA with $2$ or $3$ parallel layers significantly outperforms the case with only one parallel layer.
	      Specifically, the fine-tuning performance of $2$-layer-\algopt-LoRA with $r=32$ surpasses that of $1$-layer-\algopt-LoRA with $r=64$, where the former one has $25\%$ fewer trainable parameters than that of $1$-layer-\algopt-LoRA.
\end{itemize}
These results demonstrate that the fine-tuning performance is bottlenecked by the model capacity and \algopt can alleviate this problem by increasing the model capacity for free.
However, we also notice that the performance of \algopt-LoRA starts to degrade as we further increase the rank of incremental weights, which might be attributed to potential optimization difficulties e.g.\ overfitting.
\looseness=-1

\paragraph{Limitations and future work.}
% \textbf{Limitations and future work.}
Although \algopt has demonstrated its prowess in enhancing model capacity, it becomes susceptible to overfitting when the internal dimension is set to a large value. Addressing this issue through adaptive weight allocation across layers remains a promising avenue for future research. Furthermore, \algopt's remarkable parameter efficiency positions it as an ideal solution for tackling demanding applications, like fine-tuning Large Language Models. We also defer exploration of this potential to future work.

% \section{Conclusion}

\section*{Acknowledgement}
We thank anonymous reviewers for their constructive and helpful reviews. We also thank Mr. Shuailong Zhu for his great help in mathematical proof.
This work was supported in part by the National Science and Technology Major Project (No.\ 2022ZD0115101), the Research Center for Industries of the Future (RCIF) at Westlake University, and the Westlake Education Foundation.
Tao Lin and Soumajit Majumder were supported by an Industrial Research Grant from the Huawei Cloud Intelligent Cloud Technologies Initiative when initializing this project.

%\clearpage

\bibliography{paper}
\bibliographystyle{configuration/iclr2024_conference}

\clearpage
\appendix
% !TeX root = iclr2024_capaboost.tex
\onecolumn
{
	\hypersetup{linkcolor=black}
	\parskip=0em
	\renewcommand{\contentsname}{Contents of Appendix}
	\tableofcontents
	\addtocontents{toc}{\protect\setcounter{tocdepth}{3}}
}
\section{Proof of Theorem \ref{thm:increase_rank}}\label{appendix: detailed-proof}

We first rewrite the theorem:

\begin{theorem}
	Assume two matrices $\mX$ and $\mY$ are randomly generated by $\mX = \mX^{\col} \mX^{\row}$ and $\mY = \mY^{\col} \mY^{\row}$ respectively.
	$\mX^{\col} := [\xx^{\col}_1, \xx^{\col}_2, \ldots, \xx^{\col}_r] \in \R^{d\times r}$, where column vector basis $\{\xx^{\col}_1, \xx^{\col}_2, \ldots, \xx^{\col}_r \}$ are sampled from $\cN(0, \mI_d)$.
	Similarly, $\mX^{\row}=[\xx^{row}_1, \xx^{\row}_2, \ldots, \xx^{\row}_r]^\top \in \R^{r\times d}$ by sampling row vector basis $\{\xx^{\row}_1, \xx^{\row}_2, \ldots, \xx^{\row}_r\}$ from $\cN(0, \mI_d)$.
	$\mI_d \in \R^{d \times d}$ denotes an identity matrix.
	For matrices $\mX \in \R^{d \times  d}$ and $\mY \in \R^{d \times d}$,  we have
	% \footnote{Event establishes with probability 1 means it almost surely happens, but doesn't mean it is bound to happen.}
	\begin{align}
		\rank(\mX \!+\! \mY) = \rank(\mX) \!+\! \rank(\mY) \, \text{ with probability equal to 1 almost surely when } 2r < d \,.
	\end{align}
\end{theorem}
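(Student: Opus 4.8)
The plan is to reduce the probabilistic rank identity to two deterministic full-rank conditions on concatenated Gaussian matrices, and then to establish those conditions by a standard genericity argument. The starting observation is that rank is always subadditive, so $\rank(\mX+\mY)\le\rank(\mX)+\rank(\mY)$ holds unconditionally; the entire content of the theorem is the matching lower bound, which can fail only on a measure-zero set. The key algebraic maneuver is to factor the sum as a single product
\begin{align}
	\mX+\mY = \begin{bmatrix}\mX^{\col} & \mY^{\col}\end{bmatrix}\begin{bmatrix}\mX^{\row}\\ \mY^{\row}\end{bmatrix}\,,
\end{align}
where $\mC:=[\mX^{\col}\ \mY^{\col}]\in\R^{d\times 2r}$ stacks the two column factors side by side and $\mD:=[(\mX^{\row})^\top\ (\mY^{\row})^\top]^\top\in\R^{2r\times d}$ stacks the two row factors vertically.

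The next step is a short linear-algebra lemma: if $\mC$ has full column rank $2r$ and $\mD$ has full row rank $2r$ (both attainable precisely because $2r\le d$), then $\rank(\mC\mD)=2r$. I would prove this by viewing $\mC$ as an injective map $\R^{2r}\to\R^{d}$ and $\mD$ as a surjection $\R^{d}\to\R^{2r}$, so that $\mathrm{Im}(\mC\mD)=\mC(\mathrm{Im}\,\mD)=\mC(\R^{2r})$ has dimension $2r$. Applying the same reasoning to each summand separately shows $\rank(\mX)=\rank(\mY)=r$ almost surely (each $\mX^{\col}$ is injective and each $\mX^{\row}$ surjective a.s.), so that $2r=\rank(\mX+\mY)=\rank(\mX)+\rank(\mY)$ is exactly the claimed identity. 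Thus everything comes down to showing $\mC$ and $\mD$ have full rank $2r$ with probability one.

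The probabilistic heart of the argument is the genericity fact that $2r$ independent samples from $\cN(0,\mI_d)$ are linearly independent almost surely whenever $2r\le d$. I would argue this by induction on the number of vectors: conditioned on any realization of the first $k-1$ of them, their span is a subspace of dimension at most $k-1<d$, hence a proper linear subspace of $\R^{d}$; since the Gaussian law is absolutely continuous with respect to Lebesgue measure, the $k$-th vector falls into this measure-zero subspace with probability zero, and a union over the finitely many steps keeps the exceptional event null. Applying this once to the $2r$ columns of $\mC$ (which are, by construction, the $2r$ independent $\cN(0,\mI_d)$ draws used to build $\mX^{\col}$ and $\mY^{\col}$) and once to the $2r$ rows of $\mD$ yields the two required full-rank conditions.

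I expect the main obstacle to be making the genericity step fully rigorous rather than heuristic: one must phrase the bad set as a proper algebraic subvariety---the common vanishing locus of the maximal $2r\times 2r$ minors of $\mC$, and likewise for $\mD$---and invoke absolute continuity carefully, handling the conditioning in the inductive step so that the almost-sure statements compose correctly across the two independent Gaussian families. The role of the hypothesis $2r<d$ is only to guarantee $2r\le d$, so that full rank $2r$ is dimensionally attainable for both $\mC$ and $\mD$; I would note in passing that $2r\le d$ already suffices for the stated conclusion, with the strict inequality serving as a harmless margin.
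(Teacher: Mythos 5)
Your proof is correct, and its linear-algebraic core is genuinely different from the paper's. The paper obtains the lower bound on $\rank(\mX+\mY)$ by invoking Marsaglia's inequality, $\rank(\mX+\mY)\ge\rank(\mX)+\rank(\mY)-\dim(\mR_1\cap\mR_2)-\dim(\mC_1\cap\mC_2)$, and then showing that the column spaces and row spaces of $\mX$ and $\mY$ intersect trivially almost surely. You instead bypass that external lemma entirely by writing $\mX+\mY=\mC\mD$ with $\mC=[\mX^{\col}\ \mY^{\col}]\in\R^{d\times 2r}$ and $\mD$ the vertical stack of $\mX^{\row},\mY^{\row}$, and reading off $\rank(\mC\mD)=2r$ from injectivity of $\mC$ and surjectivity of $\mD$; this makes the deterministic part self-contained, exact rather than an inequality, and arguably more transparent about why $2r\le d$ is the operative condition. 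The probabilistic ingredient is essentially identical in both arguments: $2r$ i.i.d.\ Gaussian vectors in $\R^d$ are in general position almost surely, proved by conditioning on the first $k-1$ vectors and noting that their span is a proper subspace of Lebesgue (hence Gaussian) measure zero --- this is exactly the paper's Step (1) and its appeal to the measure-zero lemma. Your closing observation that $2r\le d$ already suffices is consistent with the paper's own remark, which only identifies $2r>d$ as the obstruction. One small point to make explicit if you write this up: the columns of $\mX^{\col}$ and $\mY^{\col}$ (and likewise the rows of $\mX^{\row}$ and $\mY^{\row}$) must be mutually independent draws for the union of the $2r$ conditioning steps to go through; this is implicit in the theorem's sampling model and is assumed by the paper as well.
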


We rely on the following two lemmas to finish our proof.
\begin{lemma}[\citet{marsaglia1964bounds}]
	\label{ap:lemma:rank}
	Let $\mX$ and $\mY$ be two matrices of the same size, $\mR_1$ and $\mR_2$ are their row spaces, $\mC_1$ and $\mC_2$ are their column spaces, $\cap$ is the linear space intersection. The rank of two matrices sum can be bound as:
	\begin{align*}
		\rank(\mX + \mY) \geq \rank(\mX) + \rank(\mY) - \dim(\mR_1 \cap \mR_2) - \dim(\mC_1 \cap \mC_2),
	\end{align*}

\end{lemma}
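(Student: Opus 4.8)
The plan is to establish the lower bound by a rank--nullity argument that funnels all four dimensions in the statement through a single auxiliary linear map. Write the two (same-size) matrices as $\mX,\mY \in \R^{m\times n}$, regard them as linear maps $\R^n \to \R^m$, and set $S := \mX + \mY$. Since $\rank(S) = n - \dim\ker(S)$, it suffices to bound $\dim\ker(S)$ from above, and the two intersection terms $\dim(\mC_1\cap\mC_2)$ and $\dim(\mR_1\cap\mR_2)$ will appear precisely as the two distinct obstructions to $\ker(S)$ being large.

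First I would analyze $\ker(S) = \{\, v \in \R^n : (\mX+\mY)v = 0 \,\}$. The key observation is that for every such $v$ we have $\mX v = -\mY v$, so this common vector lies in $\col(\mX)\cap\col(\mY) = \mC_1\cap\mC_2$. This allows me to define a linear map $\phi : \ker(S) \to \mC_1\cap\mC_2$ by $\phi(v) = \mX v$. Its kernel is exactly $\ker(\mX)\cap\ker(\mY)$, and its image sits inside $\mC_1\cap\mC_2$, so applying rank--nullity to $\phi$ gives
\begin{align*}
\dim\ker(S) \;\le\; \dim\bigl(\ker(\mX)\cap\ker(\mY)\bigr) + \dim(\mC_1\cap\mC_2)\,.
\end{align*}

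Next I would rewrite the joint-kernel term as a row-space quantity. Letting $\mZ$ be the $2m\times n$ matrix obtained by stacking $\mX$ above $\mY$, we have $\ker(\mX)\cap\ker(\mY) = \ker(\mZ)$, so rank--nullity for $\mZ$ gives $n - \dim\ker(\mZ) = \rank(\mZ)$; moreover the row space of $\mZ$ is $\mR_1 + \mR_2$, whence $\rank(\mZ) = \rank(\mX) + \rank(\mY) - \dim(\mR_1\cap\mR_2)$ by the dimension formula $\dim(U+V) = \dim U + \dim V - \dim(U\cap V)$. Substituting $\rank(S) = n - \dim\ker(S)$ into the displayed inequality and using this identity for $n - \dim(\ker(\mX)\cap\ker(\mY)) = \rank(\mZ)$ then yields exactly $\rank(\mX+\mY) \ge \rank(\mX) + \rank(\mY) - \dim(\mR_1\cap\mR_2) - \dim(\mC_1\cap\mC_2)$.

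The main obstacle here is conceptual rather than computational: one must find the single device that produces both intersection terms at once. The map $\phi$ is exactly that device --- its \emph{image} encodes the column-space overlap $\mC_1\cap\mC_2$, while its \emph{kernel}, reinterpreted through the stacked matrix $\mZ$, encodes the row-space overlap $\mR_1\cap\mR_2$. Everything remaining is routine bookkeeping with rank--nullity and the subspace dimension formula; the only points I would verify carefully are that $\phi$ is well defined with codomain $\mC_1\cap\mC_2$ (using the identity $\mX v = -\mY v$ on $\ker(S)$) and that the row space of $\mZ$ is indeed $\mR_1+\mR_2$.
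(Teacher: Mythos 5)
Your proof is correct, and it is worth noting that the paper itself does not prove this lemma at all: it is imported verbatim from \citet{marsaglia1964bounds} and used as a black box in the proof of Theorem~\ref{thm:increase_rank}. So you have supplied a self-contained argument where the paper offers only a citation. Your argument is sound at every step: for $v \in \ker(\mX+\mY)$ the identity $\mX v = -\mY v$ puts $\phi(v) := \mX v$ in $\mC_1 \cap \mC_2$, so $\phi$ is well defined; its kernel is exactly $\ker(\mX)\cap\ker(\mY)$ (if $\mX v = 0$ and $(\mX+\mY)v=0$ then $\mY v = 0$, and conversely), so rank--nullity on $\phi$ gives $\dim\ker(\mX+\mY) \le \dim\bigl(\ker(\mX)\cap\ker(\mY)\bigr) + \dim(\mC_1\cap\mC_2)$; and the stacked matrix $\mZ$ correctly converts the joint kernel into the row-space term, since $\ker(\mZ) = \ker(\mX)\cap\ker(\mY)$, the row space of $\mZ$ is $\mR_1+\mR_2$, and $\dim(\mR_1+\mR_2) = \rank(\mX)+\rank(\mY)-\dim(\mR_1\cap\mR_2)$. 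Combining via $\rank(\mX+\mY) = n - \dim\ker(\mX+\mY)$ yields precisely the stated bound. This is essentially the classical route to the Marsaglia-type lower bound, and your closing observation correctly identifies the one conceptual pivot --- a single auxiliary map whose image captures the column-space overlap and whose kernel, after stacking, captures the row-space overlap; the rest is routine. The only cosmetic caveat is that the lemma as stated in the paper writes $\dim(\mR_1\cap\mR_2)$ and $\dim(\mC_1\cap\mC_2)$ for subspace intersections (intersections of subspaces are never empty, pace the paper's own usage of $\neq \varnothing$ elsewhere), and your proof handles these as subspaces, which is the right reading.
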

\begin{lemma}[\citep{bogachev2007measure}]
	\label{ap:lemma:measure}
	If $r<d$, then every r-dimensional subspace of $\R^n$ has d-dimensional Lebesgue measure zero.
\end{lemma}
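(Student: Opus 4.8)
The plan is to reduce the claim to the single elementary fact that a coordinate hyperplane of $\R^d$ carries zero $d$-dimensional Lebesgue measure, and then transport this back to an arbitrary $r$-dimensional subspace using the orthogonal invariance of Lebesgue measure together with monotonicity. Since the statement is purely measure-theoretic and the ambient dimension strictly exceeds the subspace dimension ($r < d$, with the subspace sitting inside $\R^d$), no probabilistic input is needed: everything follows from the structure of Lebesgue measure $\lambda$. Throughout I write $\lambda^*$ for outer measure.

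First I would normalize the subspace. Let $V \subseteq \R^d$ be an arbitrary $r$-dimensional linear subspace with $r < d$. Choose an orthonormal basis of $V$ and extend it to an orthonormal basis of $\R^d$; the resulting change-of-basis matrix $\mQ$ is orthogonal and maps the coordinate subspace $V_0 := \R^r \times \{0\}^{d-r}$ onto $V$. Because an orthogonal map has $|\det \mQ| = 1$, the change-of-variables formula shows Lebesgue measure is invariant under $\mQ$, i.e. $\lambda(\mQ A) = \lambda(A)$ for every measurable set $A$. Hence $\lambda(V) = \lambda(V_0)$, and it suffices to prove $\lambda(V_0) = 0$.

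Second I would reduce $V_0$ to a hyperplane and cover it. Since $r < d$, we have $V_0 \subseteq H := \{x \in \R^d : x_d = 0\}$, so by monotonicity it is enough to show $\lambda(H) = 0$. Writing $H = \bigcup_{k=1}^{\infty} H_k$ with $H_k := [-k,k]^{d-1} \times \{0\}$, I would bound each bounded slab: for every $\delta > 0$ one has $H_k \subseteq [-k,k]^{d-1} \times [-\delta,\delta]$, a box of volume $(2k)^{d-1}(2\delta)$. Letting $\delta \to 0$ forces $\lambda^*(H_k) = 0$, and countable subadditivity gives $\lambda^*(H) \le \sum_{k} \lambda^*(H_k) = 0$. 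Chaining the measure-preserving identity with the inclusions then yields $\lambda(V) = \lambda(V_0) \le \lambda(H) = 0$, which is the claim.

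The argument contains no genuinely hard step; the only points requiring care are the two structural facts it leans on, namely that an orthogonal transformation preserves Lebesgue measure (justified through $|\det \mQ| = 1$) and that a countable union of null sets is null. If one prefers to avoid invoking orthogonal invariance explicitly, an equivalent route is a direct Fubini computation: after the rotation, $\lambda(V_0)$ equals the integral over $\R^r$ of the $(d-r)$-dimensional measure of the single point $\{0\}^{d-r}$, which vanishes because $d - r \ge 1$. Either path is routine, so the real \emph{decision} rather than obstacle is how much of this standard machinery to take as given versus expand inline.
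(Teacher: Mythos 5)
Your proof is correct, but note that the paper itself never proves this lemma: it is imported wholesale from \citet{bogachev2007measure} as a black box, so there is no internal argument to compare against. Your route --- normalize $V$ to the coordinate subspace $V_0 = \R^r \times \{0\}^{d-r}$ via an orthogonal change of basis (measure-preserving since $\left|\det \mQ\right| = 1$), enlarge to the hyperplane $\{x_d = 0\}$ using $r \le d-1$, and kill the hyperplane by covering each bounded slab $[-k,k]^{d-1} \times \{0\}$ with boxes of volume $(2k)^{d-1}(2\delta)$ and letting $\delta \to 0$ --- is exactly the standard textbook argument, and every step is sound. Two small remarks: the lemma as stated in the paper says ``subspace of $\R^n$,'' an evident typo for $\R^d$, which you silently and correctly repair; and measurability of $V$ needs no separate discussion, both because a linear subspace is closed (hence Borel) and because your bound is on \emph{outer} measure, so completeness of Lebesgue measure makes the conclusion automatic. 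Your alternative Fubini route in the last paragraph is equally fine; either version would serve as a self-contained replacement for the citation.
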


\begin{proof}
	\textbf{Step (1).} Fix $\mX$, define column space of $\mX^{\col}$as
	\begin{align}
		\ColumnSpace(\mX^{\col})=\{\xx \in \R^d \ |\  \xx = \sum \lambda_i \xx^{\col}_i , \lambda_i \in \R \} \,.
	\end{align}
	For $\mX$, let $\mX = [\xx_1, \xx_2 , \ldots, \xx_r ]$, then $\xx_k = x^{\row}_{k1} \xx^{\col}_1 + x^{\row}_{k2} \xx^{\col}_2 + \ldots + x^{\row}_{kr} \xx^{\col}_r$, which is a linear combination of
	$\{ \xx^{col}_1, \xx^{col}_2, \ldots, \xx^{col}_r \}$.
	Therefore, we have
	\begin{align}
		\ColumnSpace(\mX) \subset \ColumnSpace( \mX^{\col}) = \{ x \in \R^d \, |\  \xx = \sum \lambda_i \xx^{\col}_i, \lambda_i \in \R\} \,.
	\end{align}

	We want to show that,
	\begin{equation} \label{ap:eq:prob1}
		\Pr{ \left( \ColumnSpace(\mX) \cap \ColumnSpace(\mY) \neq \varnothing \right) } = 0
	\end{equation}
	Since $\rank(\mX) \leq r$, $\ColumnSpace(\mX)$ is a space built on the $r$ basis vectors and thus a proper subspace of dimension at most $r$.

	We denote the probability measure with Gaussian distribution as $\mu$, which is continuous with respect to Lebesgue measure $dx$.
	\begin{equation}
		\Pr{ \left( \yy^{\col}_1 \text{is linear dependent with} \{ \xx^{\col}_1,...,\xx^{\col}_r \} \right) } = \mu \left( \ColumnSpace(\mX) \right) = 0.
	\end{equation}
	according to Lemma~\ref{ap:lemma:measure}.

	Similarly, for vector $\yy^{\col}_j, \forall j>1$, since $\{ \xx^{\col}_1, \ldots, \xx^{\col}_r, \yy^{\col}_1, \ldots, \yy^{\col}_{j-1}\}$ has at most $r+j-1<d$ basis vectors, then
	\begin{equation}\label{ap:eq:2}
		\begin{split}
			& \Pr{ \left( \yy^{\col}_j \text{is linear dependent with} \{ \xx^{\col}_1, \ldots, \xx^{\col}_r, \yy^{\col}_1, \ldots, \yy^{\col}_{j-1}\} \right) } \\
			& =\mu \left( \ColumnSpace(\mX) \cup \ColumnSpace \left(\{\yy^{\col}_1, \ldots, \yy^{\col}_{j-1}\} \right) \right) = 0.
		\end{split}
	\end{equation}

	Since all $\yy^{\col}_j$ is linearly dependent with previous vectors with probability 0, we can easily show \Eqref{ap:eq:prob1}. %Actually, following similar proof, $rank(A)=rank(B)=r$ with probability 1. 
	\\
	\\
	\textbf{Step (2).} Using the same argument, define the
	\begin{align}
		\RowSpace(\mX) = \{\xx \in \R^d \ |\  \xx= \sum \lambda_i \xx^{\row}_i \,, \lambda_i \in \R\} \,,
	\end{align}
	it is easy to prove that,
	\begin{align}
		\Pr{ \left( \RowSpace(\mX) \cap \RowSpace(\mY) \neq \varnothing \right) } = 0 \,.
	\end{align}
	\\
	\textbf{Step (3).} With $\Pr{ \left( \ColumnSpace(\mX) \cap \ColumnSpace(\mY) \neq \varnothing \right) } = 0$ and $\Pr{ \left( \RowSpace(\mX) \cap \RowSpace(\mY) \neq \varnothing \right) } = 0$, we can get $\Pr{ \left( \dim(\mR_1 \cap \mR_2) \right) }=0$ and $\Pr{\left( \dim(\mC_1 \cap \mC_2) \right)} = 0$.

	By Lemma~\ref{ap:lemma:rank}, $\Pr{ \left( \rank(\mX + \mY) \geq \rank(\mX) + \rank(\mY) \right) }= 1 $.
	Since $\rank(\mX + \mY) \leq \rank(\mX) + \rank(\mY)$\citep{marsaglia1964bounds}, $\Pr{ \left( \rank(\mX + \mY) = \rank(\mX) + \rank(\mY) \right) }= 1 $
\end{proof}

\begin{remark}
	The condition $2r<d$ exists because if $2r>d$ then Equation \ref{ap:eq:2} doesn't hold since $r+j-1\geq d$ when $j=r$.
	Physically, since $\text{rank}(\mX+\mY) \leq dim(\mX+\mY)=d$, it is impossible that $\text{rank}(\mX+\mY)=2r>d$

\end{remark}

\section{Implementation Details}
\label{appex:implementation_details}

The implementation of our algorithm builds on the publicly available \textit{PyTorch}~\citep{paszke2019pytorch} and \textit{Adapter-Transformers}~\citep{pfeiffer2020AdapterHub}.

\section{Datasets}
\label{appex:datasets}

\subsection{GLUE Benchmark}

The General Language Understanding Evaluation (GLUE) is a benchmark of nine sentence- or sentence-pair language understanding tasks~\citep{wang2018glue}, designed to evaluate and analyze the performance of language model with respect to a broad range of linguistic phenomena found in natural language tasks. Similar to two most recent work~\citep{zeng2023onenetwork,zhang2022adaptive}, we conduct extensive experiments on 8 GLUE tasks, including CoLA, SST-2, MRPC, QQP, STS-B, MNLI, QNLI, and RTE. We present the dataset statistics of GLUE \citep{wang2018glue} in~\autoref{tab:glue}.
\begin{table*}[htb!]
	\begin{center}
		\caption{Summary of the GLUE benchmark.}
		\label{tab:glue}
		\begin{tabular}{l|l|c|c|c|c|c}
			\toprule
			\bf Corpus & Task          & \#Train & \#Dev & \#Test & \#Label & Metrics               \\ \midrule
			\multicolumn{6}{@{\hskip1pt}r@{\hskip1pt}}{Single-Sentence Classification (GLUE)}       \\ \hline
			CoLA       & Acceptability & 8.5k    & 1k    & 1k     & 2       & Matthews corr         \\ \hline
			SST-2      & Sentiment     & 67k     & 872   & 1.8k   & 2       & Accuracy              \\ \midrule
			\multicolumn{6}{@{\hskip1pt}r@{\hskip1pt}}{Pairwise Text Classification (GLUE)}         \\ \hline
			MNLI       & NLI           & 393k    & 20k   & 20k    & 3       & Accuracy              \\ \hline
			RTE        & NLI           & 2.5k    & 276   & 3k     & 2       & Accuracy              \\ \hline
			% WNLI & NLI &634& 71& 146& 2 & Accuracy \\ \hline
			QQP        & Paraphrase    & 364k    & 40k   & 391k   & 2       & Accuracy/F1           \\ \hline
			MRPC       & Paraphrase    & 3.7k    & 408   & 1.7k   & 2       & Accuracy/F1           \\ \hline
			QNLI       & QA/NLI        & 108k    & 5.7k  & 5.7k   & 2       & Accuracy              \\ \midrule
			\multicolumn{5}{@{\hskip1pt}r@{\hskip1pt}}{Text Similarity (GLUE)}                      \\ \hline
			STS-B      & Similarity    & 7k      & 1.5k  & 1.4k   & 1       & Pearson/Spearman corr \\ \bottomrule
			%			\multicolumn{6}{@{\hskip1pt}r@{\hskip1pt}}{Pairwise Text Classification} \bottomrule %\\ \hline
			% 			SNLI & NLI& 549k &9.8k&9.8k&3& Accuracy\\ \hline
			% 			SciTail & NLI& 23.5k &1.3k&2.1k&2& Accuracy\\ \hline
			% 			ANLI & NLI& 163k &3.2k&3.2k&3& Accuracy\\ \hline
		\end{tabular}
	\end{center}
	\vspace{-2mm}
\end{table*}

\subsection{SQuADv1.1 and SQuADv2.0}
Stanford Question Answering Dataset (SQuAD) is a reading comprehension dataset, consisting of questions collected by crowdworkers on a set of Wikipedia articles, where the answer to every question is a segment of text, or span, from the corresponding reading passage, or the question might be unanswerable. The statistics of SQuADv1.1 and SQuADv2.0 are shown in~\autoref{tab:app_squad}.

\begin{table}[h!]
    \centering
    \vspace{2mm}
    \caption{Statistics of the SQuAD dataset.}\label{tab:app_squad}
    \begin{tabular}{l|cc}
    \toprule
    & \# Train & \# Validation \\
    \midrule
    SQuAD v1.1 & 87,599 & 10,570 \\
    SQuAD v2.0 & 130,319 & 11,873 \\
    \bottomrule
    \end{tabular}
\end{table}

\subsection{Vtab-1k}
Visual Task Adaptation Benchmark (VTAB) proposed by \cite{zhai2019large}, has good representations as those that adapt to diverse, unseen tasks with few examples.
It includes CIFAT-100, DTD, Flower102, Pets, SVHN, Eurosat, Resisc45, Clevr-Count, Clevr-Dist, DMLab, dSpr-Ori and sNORB-Azim datasets and classify them into Natural, Specialized and Structured catagories.

\section{Natural Language Understanding}
\label{appex:nlu}

\subsection{RoBERTa-base}
\label{appex:roberta_base}

RoBERTa~\citep{liu2019roberta} uses an optimized pre-training recipe based on the one originally proposed in BERT~\citep{devlin2018bert} and improves the performance on the same tasks with BERT without introducing many more tunable model parameters. It is a competitive choice of pre-trained model and broadly used by prior work for comprehensive evaluations~\citep{hu2022lora,zeng2023onenetwork}. We take the pre-trained RoBERTa-base model from the \textit{HuggingFace Transformers} library~\citep{wolf-etal-2020-transformers} and fine-tune it on the GLUE benchmark~\citep{wang2018glue}. We initialize the model to the pre-trained model at the beginning of every fine-tuning task.

\subsubsection{Budget Configuration}
\label{appex:roberta_budget_config}

We present the budget configuration for experiments on RoBERTa-base in~\autoref{tab:RoBERTa-base-budget}. $l$/$r$/$\text{bn}$ control the parameter budget for prefix-tuning/LoRA/PAdapter-tuning, respectively. \textit{$\%$ FT Params} denotes the percentage of fine-tunable parameters in the model. We use the same hyper-parameters for all tasks as prior work~\citep{zeng2023onenetwork} and show the difference in the percentage of trainable parameters relative to the fully fine-tuning.
\begin{table*}[ht!]
	\normalsize
	\setlength\tabcolsep{6pt}
	\centering
	\scalebox{0.8}{
		\begin{tabular}{l |cccc}
			\toprule
			\multicolumn{5}{l}{\it \textbf{Prefix-tuning}}                         \\
			\midrule
			PEFT Methods               & Prefix   & ProPETL &            &         \\
			\# Dimension ($l$)         & 64       & 64      & -          & -       \\
			\% FT Params               & 0.95\%   & 1.03\%  & -          & -       \\
			\midrule
			\multicolumn{5}{l}{\it \textbf{PAdapter-tuning}}                       \\
			\midrule
			PEFT Methods               & PAdapter & ProPETL & \algopt    &         \\
			\# Dimension ($\text{bn}$) & 64       & 64      & 64         & -       \\
			\% FT Params               & 0.95\%   & 1.04\%  & \bf 0.71\% & -       \\
			\midrule
			\multicolumn{5}{l}{\it \textbf{LoRA}}                                  \\
			\midrule
			PEFT Methods               & LoRA     & ProPETL & \algopt    & AdaLoRA \\
			\# Dimension ($r$)         & 32       & 32      & 32         & 32      \\
			\% FT Params               & 0.95\%   & 1.04\%  & \bf 0.71\% & 0.91\%  \\
			\bottomrule
		\end{tabular}
	}
	\caption{Budget configuration for experiments on RoBERTa-base. $l$ denotes the length of prepended prefix vectors in Prefix-tuning. $r$ represents the preset rank value of LoRA modules. $\text{bn}$ refers to the bottleneck dimension of Adapter modules.}
	\label{tab:RoBERTa-base-budget}
\end{table*}

\subsubsection{Training Details}
\label{appex:roberta_training_details}

We tune the learning rate from \{$1 \times 10^{-4},3 \times 10^{-4},5 \times 10^{-4},6 \times 10^{-4},1 \times 10^{-3},1.5 \times 10^{-3},2 \times 10^{-3},3 \times 10^{-3},5 \times 10^{-3}$\} and pick the learning rate with the best performance for each dataset. The details of hyper-parameters are shown in~\autoref{tab:roberta_lora_hparams} and~\autoref{tab:roberta_padapter_hparams}. For fair comparison, we only append incremental weights to self-attention layers in AdaLoRA as the other baselines do. We set the initial rank of each incremental matrix as $48$ and the average target rank as $32$.

\begin{table*}[h!]
	\vspace{1mm}
	\caption{Hyper-parameter setup of \algopt-LoRA for RoBERTa-base model on GLUE benchmark.}
	\vspace{-1mm}
	\label{tab:roberta_lora_hparams}
	\begin{center}
		\begin{small}
			\begin{tabular}{l|ccccc}
				\toprule
				{Dataset}   & {learning rate}     & {batch size} & {\# epochs}   & {\# tied layers}   & {Density}
				\\
				\midrule
				{\bf MNLI}  & {$5\times 10^{-4}$} & 128          & 20            & 2                  & 0.5
				\\
				{\bf RTE}   & $ 6\times 10^{-4} $ & 128          & 40            & 2                  & 0.5
				\\
				{\bf QNLI}  & $ 5\times 10^{-4} $ & 128          & 10            & 2                  & 0.5
				\\
				{\bf MRPC}  & $ 5\times 10^{-4} $ & 128          & 40            & 2                  & 0.5
				\\
				{\bf QQP }  & $5\times 10^{-4}$   & 128          & 20             & 2                  & 0.5
				\\
				{\bf SST-2} & $ 1\times 10^{-4} $ & 128          & 20            & 2                  & 0.5
				\\
				{\bf CoLA}  & $ 5\times 10^{-4} $ & 128          & 40            & 2                  & 0.5
				\\
				{\bf STS-B} & $ 5\times 10^{-4} $ & 128          & 20            & 2                  & 0.5
				\\
				\bottomrule
			\end{tabular}
		\end{small}
	\end{center}
	%\vspace{-1mm}
\end{table*}

\begin{table*}[h!]
	\vspace{1mm}
	\caption{Hyper-parameter setup of \algopt-PAdapter for RoBERTa-base model on GLUE benchmark.}
	\vspace{-1mm}
	\label{tab:roberta_padapter_hparams}
	\begin{center}
		\begin{small}
			\begin{tabular}{l|ccccc}
				\toprule
				{Dataset}   & {learning rate}       & {batch size} & {\# epochs}   & {\# tied layers}   & {Density}
				\\
				\midrule
				{\bf MNLI}  & {$1\times 10^{-3}$}   & 128          & 10            & 2                  & 0.5
				\\
				{\bf RTE}   & $ 1\times 10^{-3} $   & 128          & 40            & 2                  & 0.5
				\\
				{\bf QNLI}  & $ 2\times 10^{-3} $   & 128          & 10            & 2                  & 0.5
				\\
				{\bf MRPC}  & $ 3\times 10^{-4} $   & 128          & 30            & 2                  & 0.5
				\\
				{\bf QQP }  & $1\times 10^{-3}$     & 128          & 10            & 2                  & 0.5
				\\
				{\bf SST-2} & $ 1\times 10^{-3} $   & 128          & 10            & 2                  & 0.5
				\\
				{\bf CoLA}  & $ 5\times 10^{-4} $   & 128          & 20            & 2                  & 0.5
				\\
				{\bf STS-B} & $ 1\times 10^{-3} $   & 128          & 20            & 2                  & 0.5
				\\
				\bottomrule
			\end{tabular}
		\end{small}
	\end{center}
	%\vspace{-1mm}
\end{table*}

\subsection{DeBERTaV3-base}
\label{appex:deberta_base}

DeBERTa~\citep{he2020deberta} proposes disentangled attention and enhanced mask decoder to enhance model performance and outperforms BERT~\citep{devlin2018bert} and RoBERTa~\citep{liu2019roberta} on a majority of NLU tasks with 80GB training data. DeBERTaV3 further improves the training efficiency through using ELECTRA-Style~\citep{clark2019electra} pre-training with Gradient Disentangled Embedding Sharing and achieves superior performance on downstream tasks. We take the pre-trained DeBERTaV3-base model from the \textit{HuggingFace Transformers} library~\citep{wolf-etal-2020-transformers} and fine-tune it on the GLUE benchmark~\citep{wang2018glue}. We initialize the model to the pre-trained model at the beginning of every fine-tuning task.

\subsubsection{Budget Configuration}
\label{appex:deberta_budget_config}

We present the budget configuration for experiments on DeBERTaV3-base in~\autoref{tab:DeBERTaV3-base-budget}. $r$/$\text{bn}$ control the parameter budget for LoRA/PAdapter-tuning, respectively. \textit{$\%$ FT Params} denotes the percentage of fine-tunable parameters in the model. We use the same experimental setup for all tasks as prior work~\citep{zhang2022adaptive} and show the difference in the percentage of trainable parameters relative to the fully fine-tuning. Note that AdaLoRA adds LoRA modules to all weight matrices of the model, while other LoRA-related variants only adds incremental modules to self-attention layers. Therefore, AdaLoRA has a smaller per-module rank value. We manually set a smaller bottleneck dimension for \algopt-PAdapter here to examine its ability in the constrained budget setting.
\begin{table*}[ht!]
	\normalsize
	\setlength\tabcolsep{6pt}
	\centering
	\scalebox{0.8}{
		\begin{tabular}{l |cccc}
			\toprule
			\multicolumn{5}{l}{\it \textbf{Adapter-tuning}}                         \\
			\midrule
			PEFT Methods               & PAdapter & ProPETL & \algopt    & HAdapter \\
			\# Dimension ($\text{bn}$) & 64       & 64      & 32         & 64       \\
			\% FT Params               & 0.64\%   & 0.69\%  & \bf 0.24\% & 0.66\%   \\
			\midrule
			\multicolumn{5}{l}{\it \textbf{LoRA}}                                   \\
			\midrule
			PEFT Methods               & LoRA     & ProPETL & \algopt    & AdaLoRA  \\
			\# Dimension ($r$)         & 8        & 8       & 8          & 2        \\
			\% FT Params               & 0.72\%   & 0.78\%  & \bf 0.54\% & 0.69\%   \\
			\bottomrule
		\end{tabular}
	}
	\caption{Budget configuration for experiments on DeBERTaV3-base. $r$ stands for the preset rank value of LoRA modules. $\text{bn}$ refers to the bottleneck dimension of Adapter modules.}
	\label{tab:DeBERTaV3-base-budget}
\end{table*}

\subsubsection{Training Details}
\label{appex:deberta_training_details}

We tune the learning rate from \{$1 \times 10^{-4},3 \times 10^{-4},5 \times 10^{-4},8 \times 10^{-4},1 \times 10^{-3},1.5 \times 10^{-3},2 \times 10^{-3},3 \times 10^{-3},5 \times 10^{-3}$\} and pick the learning rate with the best performance for each dataset. The details of hyper-parameters are shown in~\autoref{tab:deberta_lora_hparams} and~\autoref{tab:deberta_padapter_hparams}. 

\begin{table*}[h!]
	\vspace{1mm}
	\caption{Hyper-parameter setup of \algopt-LoRA for DeBERTaV3-base model on GLUE benchmark.}
	\vspace{-1mm}
	\label{tab:deberta_lora_hparams}
	\begin{center}
		\begin{small}
			\begin{tabular}{l|ccccc}
				\toprule
				{Dataset}   & {learning rate}     & {batch size} & {\# epochs}   & {\# tied layers}   & {Density}
				\\
				\midrule
				{\bf MNLI}  & {$5\times 10^{-4}$} & 128          & 20            & 2                  & 0.5
				\\
				{\bf RTE}   & $ 2\times 10^{-3} $ & 128          & 50            & 2                  & 0.5
				\\
				{\bf QNLI}  & $ 1\times 10^{-3} $ & 128          & 15            & 2                  & 0.5
				\\
				{\bf MRPC}  & $ 3\times 10^{-3} $ & 128          & 50            & 2                  & 0.5
				\\
				{\bf QQP }  & $5\times 10^{-4}$   & 128          & 20            & 2                  & 0.5
				\\
				{\bf SST-2} & $ 3\times 10^{-4} $ & 128          & 25            & 2                  & 0.5
				\\
				{\bf CoLA}  & $ 5\times 10^{-4} $ & 128          & 15            & 2                  & 0.5
				\\
				{\bf STS-B} & $ 5\times 10^{-4} $ & 128          & 25            & 2                  & 0.5
				\\
				\bottomrule
			\end{tabular}
		\end{small}
	\end{center}
	%\vspace{-1mm}
\end{table*}

\begin{table*}[h!]
	\vspace{1mm}
	\caption{Hyper-parameter setup of \algopt-PAdapter for DeBERTaV3-base model on GLUE benchmark.}
	\vspace{-1mm}
	\label{tab:deberta_padapter_hparams}
	\begin{center}
		\begin{small}
			\begin{tabular}{l|ccccc}
				\toprule
				{Dataset}   & {learning rate}       & {batch size} & {\# epochs}   & {\# tied layers}   & {Density}
				\\
				\midrule
				{\bf MNLI}  & {$5\times 10^{-4}$}   & 128          & 20            & 2                  & 0.5
				\\
				{\bf RTE}   & $ 1.5\times 10^{-3} $ & 128          & 50            & 2                  & 0.5
				\\
				{\bf QNLI}  & $ 5\times 10^{-3} $   & 128          & 20            & 2                  & 0.5
				\\
				{\bf MRPC}  & $ 5\times 10^{-3} $   & 128          & 50            & 2                  & 0.5
				\\
				{\bf QQP }  & $ 1\times 10^{-3} $   & 128          & 30            & 2                  & 0.5
				\\
				{\bf SST-2} & $ 3\times 10^{-4} $   & 128          & 10            & 2                  & 0.5
				\\
				{\bf CoLA}  & $ 3\times 10^{-4} $   & 128          & 10            & 2                  & 0.5
				\\
				{\bf STS-B} & $ 1\times 10^{-3} $   & 128          & 20            & 2                  & 0.5
				\\
				\bottomrule
			\end{tabular}
		\end{small}
	\end{center}
	%\vspace{-1mm}
\end{table*}

\section{Question Answering}
\label{appex:qa}

\subsection{Budget Configuration}
\label{appex:qa_budget_config}

We present the budget configuration for experiments on DeBERTaV3-base in~\autoref{tab:app_squad_budget}.

\begin{table*}[h!]
	\vspace{2mm}
	\caption{Detailed budget setup for question answering.}
	\vspace{-1mm}
	\label{tab:app_squad_budget}
	\begin{center}
	\begin{small}
	\begin{tabular}{l|cccc}
	\toprule
	{\# Params} & {Houlsby Adapter} & {Pfeiffer Adapter} & {LoRA} & {{\algopt-PAdapter}}
	\\
	~ & $ d $ & $ d $ & $ r $ & $ d $
	\\
	\midrule
	{0.65\%} & 32 & 64 & 8 & 64 
	\\
	{0.32\%} & 16 & 32 & 4 & 32
	\\
	{0.16\%} & 8 & 16 & 2 & 16
	\\
	{0.08\%} & 4 & 8 & 1 & 8
	\\
	\bottomrule
	\end{tabular}
	\end{small}
	\end{center}
	%\vspace{-1mm}
\end{table*}

\subsection{Training Details}
\label{appex:qa_training_details}

We tune the learning rate from \{$1 \times 10^{-4},3 \times 10^{-4},5 \times 10^{-4},8 \times 10^{-4},1 \times 10^{-3}$\} and pick the learning rate with the best performance for each dataset. The details of hyper-parameters are shown in~\autoref{tab:app_squad_setup}. 

\begin{table*}[h!]
	\vspace{1mm}
	\caption{Hyper-parameter setup of {\algopt} for DeBERTaV3-base model on question answering tasks.}
	\vspace{-1mm}
	\label{tab:app_squad_setup}
	\begin{center}
	\begin{small}
	\begin{tabular}{l|ccccc}
	\toprule
	{Dataset} & {learning rate} & {batch size} & {\# epochs} & {\# tied layers} & {Density} 
	\\
	\midrule 
	{\bf SQuADv1.1} & {$5\times 10^{-4}$} & 64 & 15 & 2 & 0.5
	\\
	{\bf SQuADv2.0} & {$5\times 10^{-4}$} & 64 & 15 & 2 & 0.5
	\\
	\bottomrule
	\end{tabular}
	\end{small}
	\end{center}
	\vspace{2mm}
\end{table*}

\section{Visual task Adaptation}
For the VTAB-1k\citep{zhai2019large}, we follows its default augmentation settings. All input images are resized to 224 $\times$ 224 and apply normalization with ImageNet means and standard deviation.
The inner dimension of LoRA is set to 8.

\section{Additional Ablation Study}
\label{appex:additional_ablation}

In this section, we implement additional ablation studies on SST-2-10k dataset, which contains 10k data examples uniformly sampled from the original SST-2 dataset, to further analyze the effectiveness of our proposed method. We use the same hyper-parameters as in~\autoref{sec:ablation_study} for all experiments in this section. The results are shown in~\autoref{fig:additional_ablation_study_our_method_roberta}.

\begin{figure}[!t]
	\centering
	\subfigure[\small Mask Density]{
		\includegraphics[width=0.45\textwidth]{./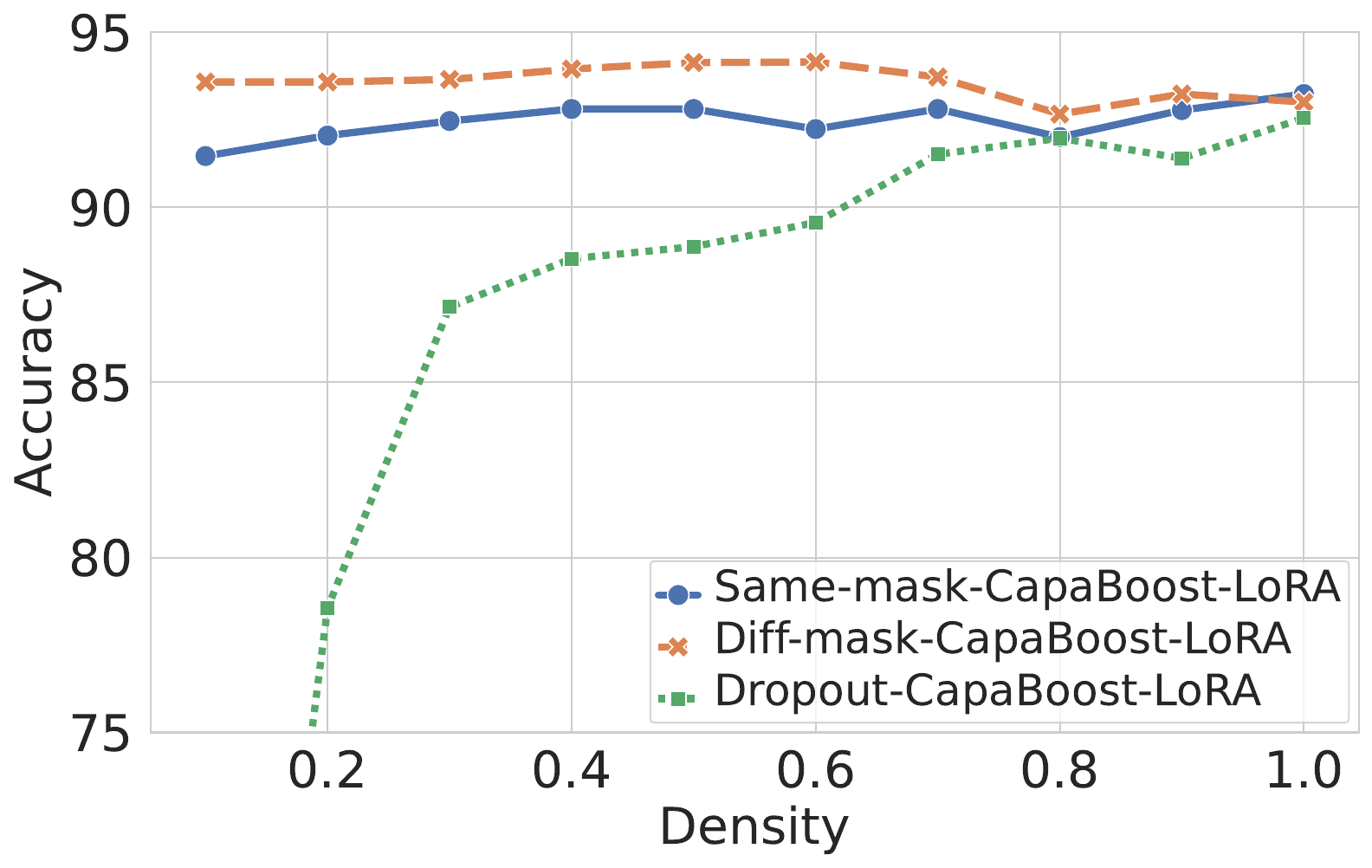}
		\label{fig:ablation-density-sst2}
	}
	\hfill
	\subfigure[\small Inner Dimension $r$]{
		\includegraphics[width=0.45\textwidth]{./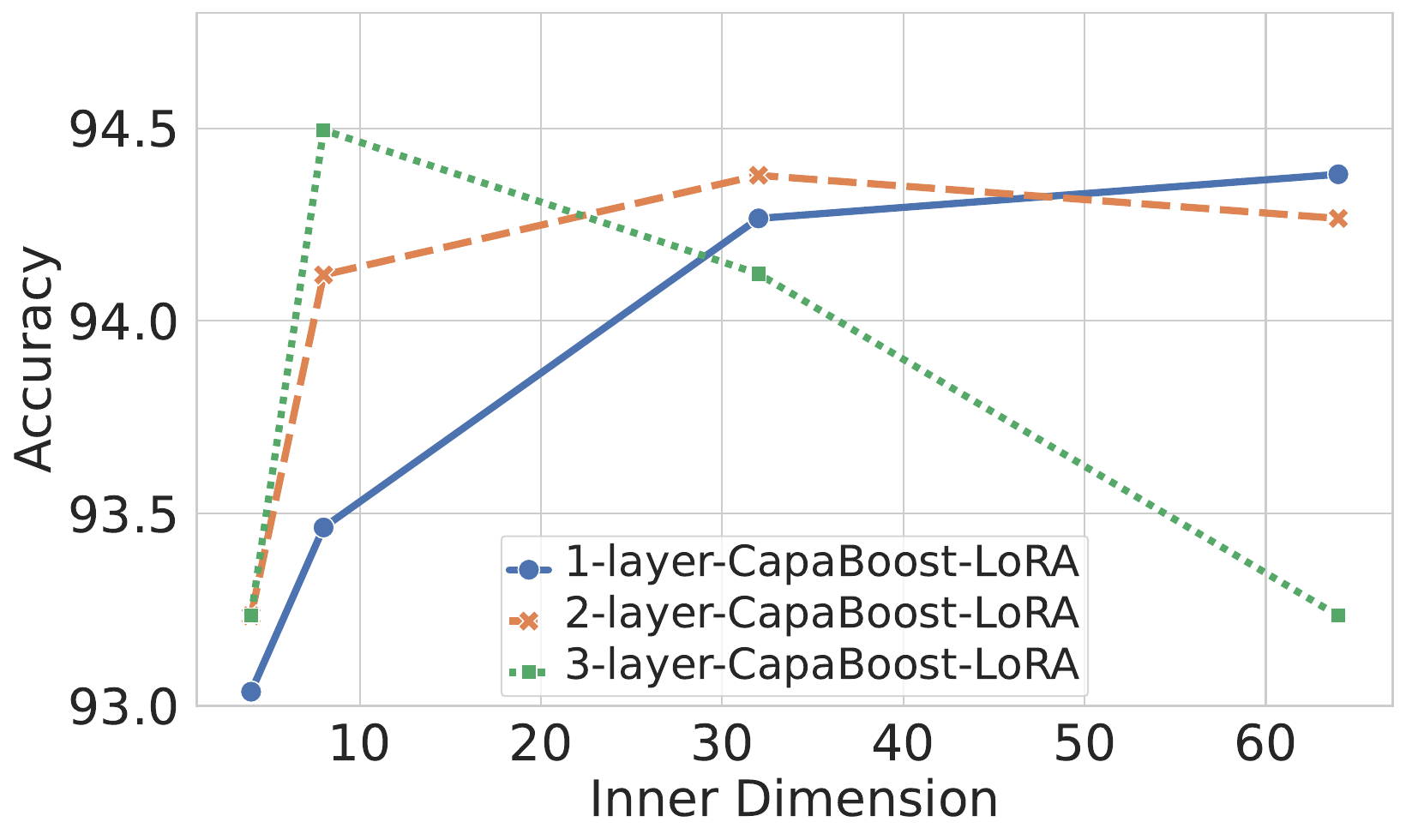}
		\label{fig:ablation-rank-sst2}
	}
	\vspace{-0.5em}
	\caption{\small
		\textbf{Ablation study for components of \algopt-LoRA on RoBERTa-base model.}
		\textbf{(a)} Average performance of \algopt-LoRA with different masks, same mask, and Dropout over different sparsity on SST-2-10k dataset. We use two tied layers with a preset LoRA rank value of $8$.
		\textbf{(b)} Average performance of \algopt-LoRA under different rank values and number of tied layers when density$=0.5$ on SST-2-10k dataset. Results are averaged over three trials.
	}
	\label{fig:additional_ablation_study_our_method_roberta}
\end{figure}

\section{Hardware Accelaration Utilization}
\label{appex:hardware_accelaration}
The hardware accelaration for sparse matrix operation in \algopt relies on NVIDIA's sparse tensor core technology. This requires sparse matrix(e.g. pruning mask in \algopt) in a shape of N:M sparsity, for example, 2:4 sparsity. 
To verify our proposed \algopt still enjoys performance benefits in N:M sparsity matrix, we present GLUE benchmark experimental results in~\autoref{tab:nm_sparsity}.

\begin{table}[!t]
	\small
	\caption{\small
		\textbf{Performance of random mask and N:M sparsity mask on the GLUE tasks.} This experiment takes RoBERTa-base as pre-trained model with inner dimension=8, and considers two masking strategies: fully random masks and N:M sparsity random pruning masks. 
		\looseness=-1
	}
	\vspace{-1em}
	\label{tab:nm_sparsity}
	\centering
	\resizebox{1.\textwidth}{!}{%
		\renewcommand\arraystretch{1.5}
		\setlength{\tabcolsep}{3pt}
		\begin{tabular}{lccccccccc}
			\toprule
			                                       & \textbf{Average}                     & \textbf{CoLA}                               & \textbf{RTE}                        & \textbf{STS-B}                         & \textbf{MRPC}                         & \textbf{QNLI}                      & \textbf{SST-2}                     & \textbf{MNLI}                  & \textbf{QQP}                   \\
			\midrule
			Random-masking \algopt-LoRA            & $85.49_{ \transparent{0.5} \pm 0.3 }$  & $63.20_{ \transparent{0.5} \pm 0.6 }$      & $76.77_{ \transparent{0.5} \pm 2.2 }$     & $90.50_{ \transparent{0.5} \pm 0.2 }$ & $89.54_{ \transparent{0.5} \pm 0.7 }$ & $92.15_{ \transparent{0.5} \pm 0.1 }$ & $94.72_{ \transparent{0.5} \pm 0.2 }$ & $86.86_{ \transparent{0.5} \pm 0.1 }$  & $90.19_{ \transparent{0.5} \pm 0.0 }$\\
			\midrule
			2:4 Sparsity \algopt-LoRA              & $85.51_{ \transparent{0.5} \pm 0.1 }$  & $64.27_{ \transparent{0.5} \pm 0.2 }$      & $75.93_{ \transparent{0.5} \pm 0.9 }$     & $90.52_{ \transparent{0.5} \pm 0.1 }$ & $89.62_{ \transparent{0.5} \pm 1.6 }$ & $92.11_{ \transparent{0.5} \pm 0.1 }$ & $94.53_{ \transparent{0.5} \pm 0.2 }$ & $86.90_{ \transparent{0.5} \pm 0.1 }$  & $90.23_{ \transparent{0.5} \pm 0.1 }$\\
			\bottomrule
		\end{tabular}%
	}
	% \vspace{-1.5em}
\end{table}

The results indicate that CapaBoost performs similarly with both fully random masks and N:M sparsity random masks. Thus, we can directly replace fully random masks with N:M sparsity random masks and benefit from the hardware acceleration provided by the Nvidia sparse tensor core. In our experiment, the fine-tuned weight matrix with N:M sparsity displays the same rank as with fully random masks.

\section{Comparison to regularization-based baselines}
\label{appex:compare_with_regularized_baselines}

One potential question arising from our \algopt approach is whether the performance gain is primarily due to the implicit regularization effect induced by the random masking. 
We address this concern via comparing \algopt with three new baselines: AdaMix~\citep{wang2022adamix}, Dropout~\citep{hinton2012improving}, and same masking. 
In particular, AdaMix proposes to randomly route training examples from a batch of inputs via stochastically chosed PEFT modules and adopt a consistency regularization loss. 
The performance improvements of Dropout and same masking compared to the underlying PEFT modules are purely coming from regularization effects. Experiment results are shown in~\autoref{tab:regularization_baselines}. Note that we use \textit{PAdapter} as the underlying PEFT mechanism for all baselines in this experiment. 

\begin{table}[!t]
	\small
	\caption{\small
		\textbf{Detailed comparison results with regularization-based baselines.} This experiment takes RoBERTa-base as pre-trained model with inner dimension=8. The underlying PEFT mechanism for all baselines is \textit{PAdapter}.
		\looseness=-1
	}
	\vspace{-1em}
	\label{tab:regularization_baselines}
	\centering
	\resizebox{1.\textwidth}{!}{%
		\renewcommand\arraystretch{1.5}
		\setlength{\tabcolsep}{3pt}
		\begin{tabular}{lccccccccc}
			\toprule
			                                       & \textbf{Average}                     & \textbf{CoLA}                               & \textbf{RTE}                        & \textbf{STS-B}                         & \textbf{MRPC}                         & \textbf{QNLI}                      & \textbf{SST-2}                     & \textbf{MNLI}                  & \textbf{QQP}                   \\
			\midrule
			AdaMix            					   & $85.53_{ \transparent{0.5} \pm 0.4 }$  & $63.19_{ \transparent{0.5} \pm 1.1 }$      & $78.22_{ \transparent{0.5} \pm 0.4 }$     & $\textbf{90.51}_{ \transparent{0.5} \pm 0.0 }$ & $90.11_{ \transparent{0.5} \pm 0.4 }$ & $92.83_{ \transparent{0.5} \pm 0.2 }$ & $94.31_{ \transparent{0.5} \pm 0.1 }$ & $86.79_{ \transparent{0.5} \pm 0.1 }$  & $89.91_{ \transparent{0.5} \pm 0.1 }$\\
			\midrule
			Same-mask-\algopt-PAdapter             & $85.61_{ \transparent{0.5} \pm 0.2 }$  & $62.77_{ \transparent{0.5} \pm 0.8 }$      & $77.98_{ \transparent{0.5} \pm 0.8 }$     & $90.16_{ \transparent{0.5} \pm 0.2 }$ & $89.46_{ \transparent{0.5} \pm 0.3 }$ & $92.66_{ \transparent{0.5} \pm 0.1 }$ & $94.50_{ \transparent{0.5} \pm 0.0 }$ & $86.90_{ \transparent{0.5} \pm 0.0 }$  & $90.41_{ \transparent{0.5} \pm 0.1 }$\\
			\midrule
			Dropout-\algopt-PAdapter               & $85.06_{ \transparent{0.5} \pm 0.2 }$  & $59.92_{ \transparent{0.5} \pm 0.7 }$      & $77.98_{ \transparent{0.5} \pm 1.5 }$     & $90.33_{ \transparent{0.5} \pm 0.2 }$ & $89.13_{ \transparent{0.5} \pm 0.3 }$ & $92.66_{ \transparent{0.5} \pm 0.1 }$ & $\textbf{94.82}_{ \transparent{0.5} \pm 0.1 }$ & $86.51_{ \transparent{0.5} \pm 0.0 }$  & $89.09_{ \transparent{0.5} \pm 0.1 }$\\
			\midrule
			\algopt-PAdapter            		   & $\textbf{86.33}_{ \transparent{0.5} \pm 0.2 }$  & $\textbf{65.25}_{ \transparent{0.5} \pm 0.9 }$      & $\textbf{79.42}_{ \transparent{0.5} \pm 1.3 }$     & $\textbf{90.51}_{ \transparent{0.5} \pm 0.0 }$ & $\textbf{90.36}_{ \transparent{0.5} \pm 0.8 }$ & $\textbf{92.90}_{ \transparent{0.5} \pm 0.1 }$ & $94.61_{ \transparent{0.5} \pm 0.1 }$ & $\textbf{87.00}_{ \transparent{0.5} \pm 0.1 }$  & $\textbf{90.61}_{ \transparent{0.5} \pm 0.0 }$\\
			\bottomrule
		\end{tabular}%
	}
	\vspace{-1.5em}
\end{table}

\section{Additional Related Work} \label{appendix:complete_related_work}
\subsection{Parameter-Efficient Fine-tuning (PEFT)}
The first line of work in PEFT picks up a subset of the original model parameters to update, such as top layers~\citep{donahue2014decaf}, specific model layers~\citep{gheini2021cross}, and internal modules~\citep{zaken2022bitfit}.
These brute-force selection approaches are effective in reducing trainable parameters, but only leading to sub-optimal performance.
Thus, various scoring functions are used to measure the importance of each parameter~\citep{sung2021training, ansell2022composable,guo2021parameter}.
However, these scoring functions are usually task-specific and require additional computation.

Another line of work proposes to share the pre-trained network and insert task-specific trainable modules to steer the behavior of neural networks, which greatly reduces storage costs~\citep{zhao2020masking}.
In particular, HAdapter proposed in \citet{houlsby2019parameter} places adapter modules after each feed-forward and attention layer.
For better efficiency, PAdapter~\citep{pfeiffer2021adapterfusion} suggests using adapters after FFN and LayerNorm modules~\citep{ba2016layer}.
Inspired by textual prompting methods~\citep{sun2020conditioned,liu2019roberta,jiang2020can,shin2020autoprompt}, Prefix-Tuning~\citep{li2021prefix} prepends additional prefix vectors to the keys and values in the attention layer.
\citet{he2021towards} systematically examine existing progress and further proposes a diverse array of Adapter variants by transferring design elements from Prefix-Tuning.
To remove inference latency introduced by incremental parameters, \citet{hu2022lora} use a similar bottleneck structure with low-rank constraint, in which learned weights can be merged into the pre-trained network.
However, the upper bound of the rank of trainable parameter matrices in these methods is largely affected by the preset rank values, leading to constrained model capacity.
Our work aims to break such capacity constraints and enhance parameter efficiency in fine-tuning.

\subsection{Low-rank Properties in Deep Neural Networks}
In the over-parameterized regime, it has been demonstrated in many deep learning tasks that neural networks enjoy low-rank properties after training~\citep{oymak2019generalization}.
Inspired by this insight, some works~\citep{jaderberg2014speeding,sainath2013low,khodak2020initialization} propose to reinforce neural network training performance by explicitly injecting the low-rank constraints and achieve great success in CNNs.
Similarly, LoRA~\citep{hu2022lora} and a number of follow-up works~\citep{dettmers2023qlora,zhang2022adaptive,chavan2023one,longlora} borrow this idea and suggest applying low-rank updates to a frozen pre-trained network for fine-tuning on downstream tasks.
While state-of-the-art model architectures like transformers have been shown to present a low-rank dimensionality and representations~\citep{aghajanyan2021intrinsic,wang2020linformer}, \citet{bhojanapalli2020low} point out that the low-rank of key and query projections in the multi-head attention modules bottlenecks the performance of transformers.
Experiments in~\citet{lialin2023stack} also demonstrate that transformers with low-rank updates perform significantly worse than full-rank baseline in the training stage.
% It is also worth emphasizing that our work provides a provably lower bound on the rank of obtained parameter matrices\tao{will it be risky? maybe just drop this sentence?}, ensuring enhanced model capacity, whereas~\citet{lialin2023stack} just give a loose upper bound.

\subsection{Weight-Tied Models}
Weight-tied models, also known as weight-sharing or weight-tying models, are a type of parameter-efficient neural network architecture, in which the same set of weights is used across different layers or parts of the input~\citep{dehghani2018universal,dabre2019recurrent, xia2019tied, lan2020ALBERT,li2021training,takase2021lessons}.
This architecture, as the backbone of most implicit models, has been widely studied in recent years for various tasks~\citep{wang2019weight,liu2020comprehensive,yang2018unsupervised,lan2020ALBERT,takase2021lessons,zhang2020deeper,bender2020can,xie2021weight,li2021training}.
For instance, the Universal Transformer proposed in~\citet{dehghani2018universal} ties the parameters through one Transformer layer; such an idea was later employed in~\citet{dabre2019recurrent,lan2020ALBERT}.
In addition, \citet{xia2019tied} introduce an encoder-decoder architecture that shares parameters between the encoder and decoder parts.
\citet{xiao2019sharing} propose a method to share attention weights to speed up the computation of Transformers.
\citet{takase2021lessons} take this idea further and proposes three strategies to tie the parameters of different layers (with various combinations), instead of just sharing the parameters of one layer with all layers. \looseness=-1

The aforementioned studies focus on proposing strategies to tie different layers and do not introduce sparse masks to a shared layer like ours.
As a separate line of research, weight-sharing in Neural Architecture Search (NAS)~\citep{zhang2020deeper,bender2020can,xie2021weight} reduces the computational overhead by sampling distinct neural architectures from a super net using sparse masks, where exponentially many architectures share weights in the same super-network and the costly training procedure is performed only once.

\subsection{Parameter Pruning for PEFT Methods}
Pruning is a widely used strategy to improve the efficiency of neural networks by detecting and removing redundant parameters.
To demonstrate the efficacy of parameter pruning, \citet{frankle2018lottery} propose the Lottery Ticket Hypothesis and show that one can always find a sparse sub-network in a dense model that, when trained in isolation, can achieve comparable performance to the original dense model.
This conclusion also holds for randomly initialized dense models.
There exist three directions of pruning, namely i) pruning at initialization, ii) dynamic pruning during training, and iii) pruning after training.
The latter two normally prune model weights relying on either importance-based or regularization-based criteria.
The idea of pruning at initialization initially also relies on the magnitude-based metric~\citep{frankle2018the}; however, it is debated in several follow-up studies~\citep{su2020sanity,frankle2021pruning,wang2022recent,he2022sparseadapter} that random masks---uniformly sampled and only fixes the pruning ratio per layer---are equally effective as the previous lottery ticket~\citep{frankle2018the}.
Note that the research of model pruning primarily aims to develop improved pruning criteria or suggest enhanced optimization strategies/objective functions.
To the best of our knowledge, our idea of introducing various random masks to a parallel weight-tied model is novel.

The most relevant work to our method might be~\citet{bai2022parameter}, which extends the codebook idea and learns masks on top of a fixed random weight vector to represent diverse dense layers.
Specifically, masks are used to select values from a random vector (i.e., codebook) and thus form layers with distinct structures.
\citet{zeng2023onenetwork} concurrently propose ProPETL, which also possesses a number of shared weights, on which different masks are learned to encode layer-specific and task-specific knowledge.
Our work, in contrast to these approaches, learns shared weights with deterministic random binary masks, leading to stronger model capability.

% SparseAdapter~\citep{he2022sparseadapter} proposes to trade sparsity for higher bottleneck dimensions, thus increasing the model capacity. \tao{might need to comment why we omit the comparison with it? right now it could be very misleading.}
% However, it needs to invest additional computation in deriving importance scores for each parameter, and parameters are either discarded or retained without any sharing.
Another scoring-based approach AdaLoRA~\citep{zhang2022adaptive} dynamically distributes the parameter budget among different layers by iteratively pruning singular values of incremental parameters in correspondence to the importance metric. In addition to extra computation, AdaLoRA also requires a higher budget of trainable parameters at the beginning of training, which does not apply to some low-resource scenarios.

\end{document}